\newcommand{\ie}{\textit{i.e.}}
\newcommand{\eg}{\textit{e.g.}}
\setlist[itemize]{topsep=0pt, itemsep=0pt, partopsep=0pt, parsep=0.5\parskip, leftmargin=10pt, before=\vspace{-0.5\parskip}}
\theoremstyle{plain}
\newtheorem{theorem}{Theorem}
\newtheorem{lemma}{Lemma}
\newtheorem{corollary}{Corollary}
\theoremstyle{definition}
\theoremstyle{remark}
\definecolor{citecolor}{HTML}{0071BC}
\definecolor{linkcolor}{HTML}{D32F2F}%{ED1C24}
\definecolor{cellcolor}{HTML}{E3F2FD}
\definecolor{red}{HTML}{D32F2F}
\definecolor{magenta}{HTML}{D81B60}
\title{Enhanced Continual Learning of\\
Vision-Language Models with Model Fusion}
\author{Haoyuan Gao$^{1,}$\thanks{Haoyuan (gaohaoyuan@sjtu.edu.cn) and Zicong contributed equally to this work. This work was conducted at MIFA Lab (members from SJTU \& SII).}\hspace*{0.4em},
  Zicong Zhang$^{1,*}$,
   Yuqi Wei$^{1}$,  
   Linglan Zhao$^{1}$\\[0.3em]
   \bf Guilin Li$^{4}$, 
   Yexin Li$^{3}$,
   Bo Wang$^{4}$,
   Linghe Kong$^{1}$,
  Weiran Huang$^{1,2,}$\thanks{Corresponding author.}\\[0.7em]
  \textsuperscript{1} Shanghai Jiao Tong University \quad
  \textsuperscript{2} Shanghai Innovation Institute\\[0.3em]
  \textsuperscript{3} State Key Laboratory of General Artificial Intelligence, BIGAI \quad
  \textsuperscript{4} Tencent
}
\begin{document}

\maketitle

\begin{abstract}
Vision-Language Models (VLMs) represent a significant breakthrough in artificial intelligence by integrating visual and textual modalities to achieve impressive zero-shot capabilities. 
However, VLMs are susceptible to catastrophic forgetting when sequentially fine-tuned on multiple downstream tasks. Existing continual learning methods for VLMs face various limitations, often relying on additional reference datasets, compromising zero-shot performance, or being restricted to parameter-efficient fine-tuning scenarios.
In this paper, we propose a novel Continual Decoupling-Unifying (ConDU) approach that pioneers the use of model fusion for continual learning in VLMs. 
Specifically, ConDU maintains a unified model along with task triggers and prototype sets, employing an iterative process of decoupling task experts for previous tasks and unifying them with the task expert for the newly learned task. 
Additionally, we introduce an inference strategy for zero-shot scenarios by aggregating predictions from multiple decoupled task experts.
Extensive experiments on the MTIL benchmark show that ConDU achieves up to a 2\% improvement in average performance across all seen tasks compared to state-of-the-art baselines, while also enhancing zero-shot capabilities relative to the original VLM. 
Our code is available at \url{https://github.com/zhangzicong518/ConDU}.
\end{abstract}

\section{Introduction}\label{introduction}

Artificial Neural Networks (ANNs) often suffer a significant performance drop on earlier tasks when learning sequentially. 
This issue, known as catastrophic forgetting \citep{mccloskey1989catastrophic,ramasesh2020anatomy}, limits the adaptability of ANNs in dynamic environments.
To overcome this challenge, continual learning (also referred to as lifelong learning) \citep{si,ewc,verwimp2023applications,shi2024continual} has been developed. This paradigm aims to enable machine learning models to acquire new knowledge over time while preserving previously learned information, thus mimicking the adaptability of the human brain.

Recently, Vision-Language Models (VLMs) such as CLIP \citep{clip} have made a major breakthrough in artificial intelligence by integrating visual and textual modalities to achieve impressive zero-shot capabilities. However, despite their demonstrated success \citep{shen2021much,zhao2023clip,fan2024improving}, VLMs remain susceptible to catastrophic forgetting when fine-tuned for multiple downstream tasks. 
Conventional continual learning approaches are insufficient for VLM fine-tuning, as they struggle to maintain the crucial zero-shot capabilities that make these models valuable \citep{zscl}.

In contrast to the extensive research on conventional continual learning, relatively few methods
\citep{zscl,s&d,moe,rail} have been proposed for continual learning of VLMs. 
Some methods, such as \citep{zscl} and \citep{s&d}, require additional reference datasets for distillation from pre-trained models, and their performance is highly sensitive to the choice of the dataset \citep{zscl}.
Moreover, these methods require careful tuning of multiple handcrafted hyperparameters to balance different optimization objectives: mitigating catastrophic forgetting, preserving zero-shot capabilities, and optimizing performance on the current task. 
Alternative methods \citep{ma,moe,rail} focus exclusively on parameter-efficient fine-tuning  \citep{ding2023parameter} employing modules such as adapters or LoRA \citep{lora}, but struggle to adapt to full fine-tuning scenarios.

To overcome these limitations, we propose the \emph{Continual Decoupling-Unifying} (ConDU),
a novel continual learning approach for VLMs that is the first to introduce model fusion for this purpose.
Model fusion \citep{task_arithmetic,ADAMerging,emr-merging} is a technique that combines multiple models into a single unified model without requiring access to the original training data. 
This property is particularly well-suited for the sequential learning scenario, as it allows one to maintain a single unified model that can be decoupled into multiple task experts to handle different tasks. However, a direct application of model fusion (iteratively merging new task experts into a single unified model) is unsuitable for continual learning as it causes severe performance degradation. Therefore, we carefully designed decoupling-unifying framework that avoids this issue by incorporating new task experts at the individual model level. 
Furthermore, ConDU is inherently compatible with both parameter-efficient and full fine-tuning paradigms, offering a flexible solution for diverse continual learning scenarios.

ConDU maintains a unified delta model and a set of task triggers throughout the continual learning process.
ConDU handles each new task by fine-tuning the pre-trained VLM to obtain its task expert, decoupling to obtain past task experts via task triggers, and unifying all task experts into an updated unified model with new task triggers.
We remark that the decoupling and unifying procedures introduced in ConDU are \emph{training-free}, 
and thus their running time is much shorter than the time required for model fine-tuning.
Moreover, compared to previous continual learning methods for VLMs mentioned earlier, ConDU eliminates the need for adjusting trade-off hyperparameters, incorporating reference datasets, and maintaining replay exemplars.

After the above continual learning process, our method supports multiple inference scenarios.
If the test sample belongs to a previously seen task and its task ID is known,
we can directly reconstruct the corresponding task expert and use it for prediction.
When the task ID is unknown or the test sample comes from an unseen task (\ie, the zero-shot scenario),
we can instead reconstruct multiple task experts relevant to the test sample's domain and make a prediction by aggregating their results.  
Evaluated on widely used benchmarks across diverse settings, including Multi-domain Task Incremental Learning (MTIL), few-shot MTIL, and task-agnostic MTIL, ConDU achieves up to a 2\% improvement in average performance across all seen tasks compared to state-of-the-art baselines, demonstrating the effectiveness of incorporating model fusion.
Moreover, ConDU exhibits strong zero-shot capabilities of VLMs, outperforming the original pre-trained VLM and other state-of-the-art continual learning methods.

The contributions of this work can be summarized as follows:
\begin{itemize}
    \item We introduce model fusion into continual learning for VLMs and propose a novel Continual Decoupling-Unifying (ConDU) framework, which is compatible with both parameter-efficient and full fine-tuning paradigms.
    \item We propose aggregating the predictions of multiple decoupled models for zero-shot scenarios.
    \item Through extensive experiments, we demonstrate that ConDU effectively learns new knowledge while preserving previously acquired knowledge and enhancing zero-shot capabilities.
\end{itemize}

\section{Related Work}\label{related work}
\paragraph{Continual Learning for VLMs.}
Conventional continual learning has been extensively studied, including architecture-based methods \citep{rusu2016progressive, mallya2018packnet, serra2018overcoming}, replay-based methods \citep{riemer2018learning,buzzega2020dark,boschini2022class,gao2023ddgr,kim2024sddgr}, and regularization-based methods \citep{ewc, si,lwf, zhao2023few, lu2024pamk}. 
However, these methods cannot be directly applied to recently developed Vision-Language Models (VLMs), as they struggle to maintain the crucial zero-shot capabilities \citep{zscl}.

Recently, continual learning methods specifically designed for VLMs have been introduced. These methods can be broadly classified into parameter-efficient fine-tuning based approaches \citep{sprompt,ma, moe,  coleclip, rail} and distillation-based methods \citep{lwfvr,zscl,  s&d}. However, these methods either require reference datasets or the careful adjustment of trade-off hyperparameters, or they are not suitable for full fine-tuning.
In contrast, our method eliminates these drawbacks by introducing model fusion.

\paragraph{Model Fusion.} 
Model fusion combines multiple models into a single unified model, retaining the strengths of its constituent models without requiring additional training data. Fisher Merging~\citep{Fisher_Merging} and RegMean~\citep{RegMean} use Fisher information matrices~\citep{fisher1922mathematical} and inner-product matrices~\citep{RegMean}, respectively, to compute fusion coefficients for weighted model fusion. Task Arithmetic~\citep{task_arithmetic} introduces a fusion technique that combines models by summing delta models, where a delta model is defined as the difference between the parameters of a fine-tuned model and its pre-trained counterpart. Other approaches, such as TIES Merging~\citep{TIESMerging}, Ada Merging~\citep{ADAMerging}, DARE~\citep{dare}, and EMR Merging~\citep{emr-merging}, focus on enhancing delta model-based fusion in various ways.

\section{Problem Formulation}\label{Problem Formulation}

In this paper, we focus on continual learning for Vision-Language Models (VLMs). 
Given a pre-trained VLM (\eg, CLIP \citep{clip}), a sequence of $T$ tasks arrives incrementally, where each task $t$ is associated with a training dataset $\mathcal{D}^t$. 
These tasks may involve distinct classes, different domains, or exhibit significant variation in sample sizes.
After seeing each task, the VLM can be updated with access to a limited memory storing essential information (\eg, selected past data or parameters). 
The goal is to develop a method that incrementally updates the VLM while achieving high performance on all previously encountered tasks and retaining its zero-shot capabilities for unseen tasks.
Additionally, we aim for the proposed continual learning method to support both parameter-efficient fine-tuning (\eg, LoRA or adapters) and full fine-tuning.

Under the continual learning setting, the system is permitted to retain only a single VLM throughout training. Yet, if multiple models were allowed, one could simply fine-tune a separate model from the pre-trained VLM for each task and choose the corresponding model at test time whenever the task identity is known.
In addition, a defining property of VLMs is their zero-shot ability, which ideally should be preserved—or even improved—after continual learning. For test samples from unseen tasks (the zero-shot case), using several specialized models fine-tuned on different domains and aggregating their predictions would naturally outperform relying on a single VLM.

Motivated by this observation, if the shared components across these individually fine-tuned models could be extracted and merged into one maintained VLM, while the task-specific differences are stored in limited memory, then a single main VLM plus small auxiliary memory could effectively mimic the behavior of multiple task-specialized models.
Moreover, this idea is inherently compatible with both parameter-efficient fine-tuning and full fine-tuning approaches.

\section{ConDU: Continual Decoupling-Unifying}\label{method}

\begin{figure*}[t]
\centering
\includegraphics[width=0.8\textwidth]{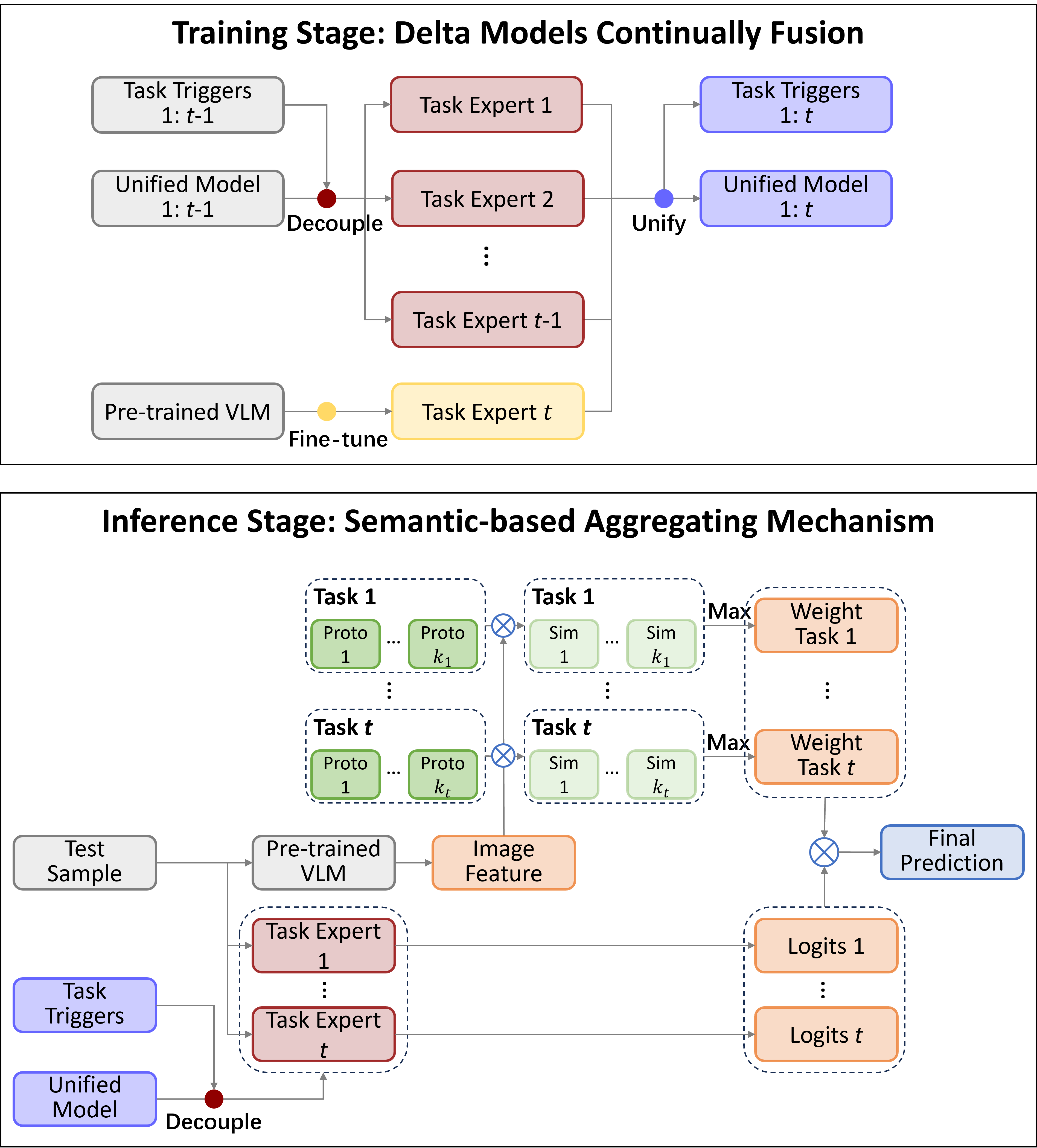}
\caption{Overall framework of the proposed method. 
This framework includes designs for both the training stage and the inference stage. The upper part of the figure corresponds to the training stage of session $t$, with the relevant components detailed in Section \ref{training_stage}. The colored points “unify” and “decouple” illustrate the corresponding operations, which are explained in Figure \ref{unifying_decoupling}a and Figure \ref{unifying_decoupling}b, respectively. During the training stage, ConDU handles each new task by fine-tuning the pre-trained VLM to obtain its task expert, decoupling to obtain past task experts via task triggers, and unifying all task experts into an updated unified model with new task triggers.
The lower part corresponds to the inference stage after session $t$, with its components detailed in Section \ref{inference_stage}. 
During the inference stage, ConDU calculate the cosine similarity between the image feature of the test sample and prototypes of each category in the feature space of pre-trained VLM, then choose the maximum similarity in each task as the weight of the corresponding task expert.}
\label{whole_method}
\vspace{-11pt}
\end{figure*}

We propose \emph{Continual Decoupling-Unifying} (ConDU), a novel continual learning approach for VLMs that leverages model fusion. Figure \ref{whole_method} shows the overall framework of ConDU. 
ConDU maintains a unified model, a set of task triggers, and a series of prototype sets throughout the continual learning process. 
Our framework includes five modules. We will introduce three modules for training in Section \ref{training_stage} and two modules for inference in Section \ref{inference_stage}.

\subsection{Delta Models Continually Fusion at Training Stage}\label{training_stage}
At each session $t$ of the continual learning process, ConDU implements three steps: Tuning Individually, Decoupling Unified Model, and Unifying Models. The time spent on Decoupling Unified Model and Unifying Models is nearly 1\% of Tuning Individually (see Appendix \ref{append:K} for detailed analysis).  Since the process of Decoupling Unified Model relies on the task triggers produced during Unifying Models, we first introduce the process of Unifying Models before detailing process of Decoupling Unified Model.

\paragraph{Tuning Individually.} We denote a VLM as \( f(\cdot; \theta) \), where \( \theta \) represents only the learnable parameters of the VLM, excluding the frozen parameters for clarity. 
At session $t$, by fine-tuning the pre-trained VLM \( \theta^{0} \) on task \( t \), we obtain a task expert \( \theta^{t} \). We defined delta model $t$, the parameter offsets of task expert $t$ relative to the pre-trained VLM, as $\delta^{t} = \theta^{t} - \theta^{0}$. We will unify delta models instead of directly unifying task experts following the setting of advanced model fusion methods  \citep{task_arithmetic,TIESMerging,emr-merging}.

\begin{figure*}[t]
\centering
\includegraphics[width=0.85\textwidth]{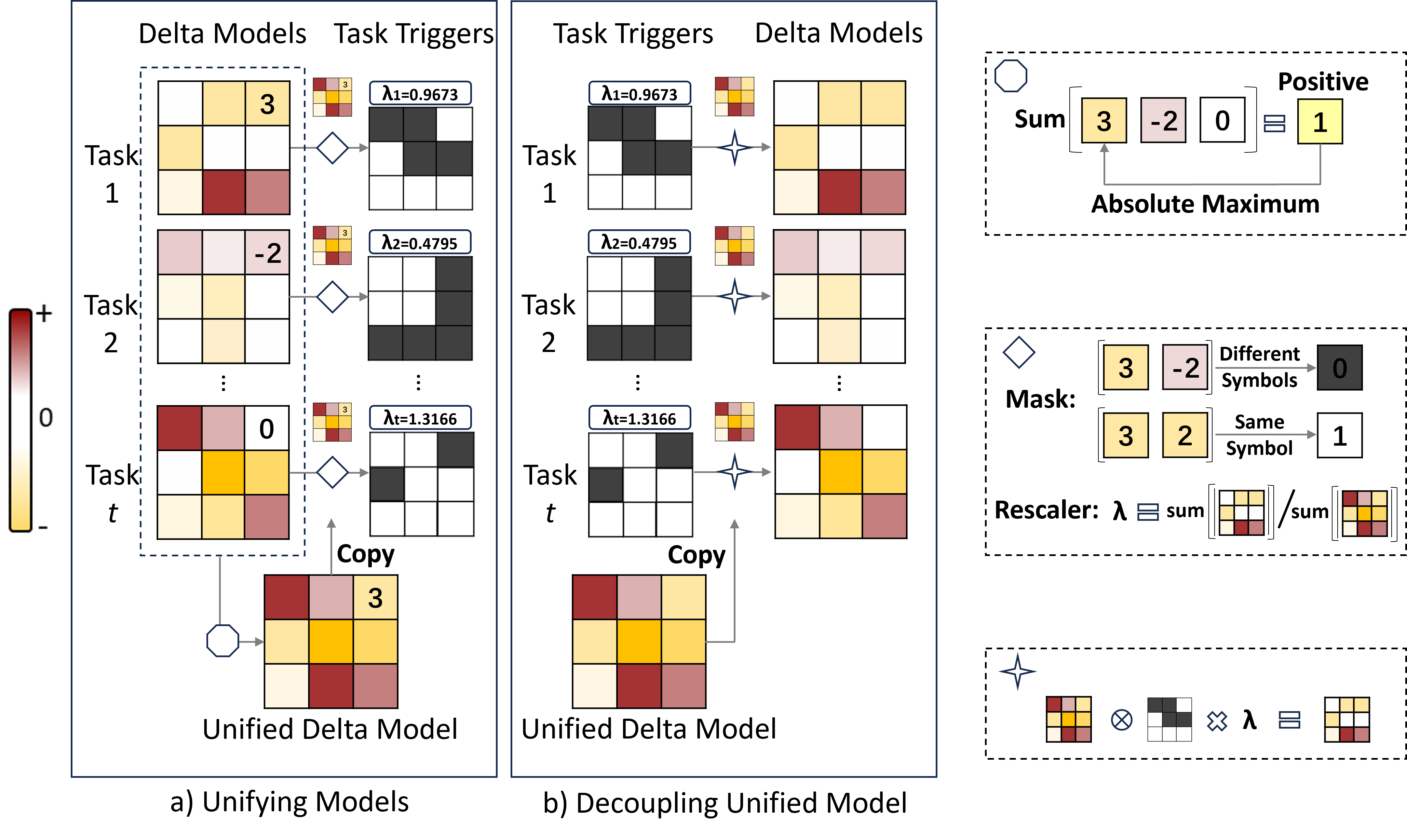}
\caption{The process of Unifying Models (a) and Decoupling Unified Model (b) is transformed to unifying delta models (a) and decoupling unified delta model (b), respectively.
Unified Model = Unified Delta Model + Pre-trained VLM.
Task Expert $i$ = Delta Model $i$ + Pre-trained VLM.
 a) When unifying delta models, the unified model is obtained by an election process. Each task's task trigger is calculated according to the difference between the delta model and the unified delta model. b) When decoupling the unified delta model, we use the task trigger $i$ on the unified delta model to reconstruct the delta model $i$.}
\label{unifying_decoupling}
\vspace{-11pt}
\end{figure*}

\paragraph{Unifying Models.} The process of Unifying Models is illustrated in Figure \ref{unifying_decoupling}a. When task $t$ arrives, we first decouple the current unified delta model $\delta^{1:t-1}$ to obtain the approximation of $\delta^{i}$ denoted as $\tilde{\delta}^{i}$ for $i=1,2,\dots,t-1$ (this process will be introduced in the paragraph of Decoupling Unified Model). Then let $\delta^i\xleftarrow{}\tilde{\delta}^{i},i=1,2,\dots,t-1$, the calculation of unified delta model is
$
    \delta^{1:t} = \text{unify}(\{\delta^{1}, \delta^{2},\dots,\delta^{t}\}),
$
where $\delta^{1:t}$ is the unified delta model, and the $j$-th dimension of it is calculated as
$
\delta_{j}^{1:t}=    
\begin{cases}
     \max_i(\delta_j^i)&\text{if} \ \sum_{i=1}^{t}\delta_j^i>0\\
          \min_i(\delta_j^i)&\text{if} \ \sum_{i=1}^{t}\delta_j^i<0.
\end{cases}
$
This process means choosing the $j$-th parameter of all $1$ to $t$ delta models with the largest absolute value and has the same sign as $\sum_{i=1}^{t}\delta_j^i$, retaining the largest magnitude and consistent sign information shared across the delta models. 

Then the unified delta model is added to the pre-trained VLM to construct the unified model $\theta^{1:t}=\theta^0+\delta^{1:t}$.

Except unified model, other productions of Unifying Models at session $t$ are $t$ task triggers. For $i=1,2,\dots,t$, each task trigger $i$ will be used on the unified model to reconstruct delta model $i$ in the future, composed of a mask $M^i$ with the same dimension as the delta model, and a rescaling scalar $\lambda^i$. The binary number of $M^i$ at position $j$ indicates whether the delta model $i$ has the same sign as the unified delta model at position $j$, that is
$
M_j^i=
\begin{cases}
    1&\text{if} \ \delta_j^i \cdot \delta_j^{1:t}>0\\
    0&\text{if} \ \delta_j^i \cdot \delta_j^{1:t}<0.
\end{cases}
$
The rescaler is to preserve the average magnitude of elements in $\delta^i$ and $M^i\odot\delta^{1:t}$, defined as $\lambda^i = \frac{\text{sum}(\text{abs}(\delta^i))}{\text{sum}(\text{abs}(M^i\odot\delta^{1:t}))}$.

The final productions of Unifying Model at session $t$ is a unified model and $t$ task triggers. Then we introduce how to use task triggers to decouple the unified model.

\paragraph{Decoupling Unified Model.} The process of Decoupling Unified Model is illustrated in \ref{unifying_decoupling}b. This process is needed in both the beginning of a training session and the inference stage. If there are $t$ seen tasks, $t$ task triggers are applied to the unified delta model to obtain $t$ delta models
$
\tilde{\delta}^{i} = \lambda^i \cdot M^i \odot \delta^{1:t},
$
which are then added to the pre-trained VLM 
$\theta^0$ to obtain $t$ task experts
$
\tilde{\theta}^{i} = \theta^0 + \tilde{\delta}^{i}.
$
At a training session, $\tilde{\delta}^{i}$ will attend the unifying instead of $\delta^i$. At inference stage, the output logits of all reconstructed task expert $f(\cdot,\tilde{\theta}^{i})$ will aggregated to predict the test samples. We will introduce this Aggregating Predictions in the next section.

\subsection{Semantic-based Aggregating Mechanism at Inference Stage}\label{inference_stage}
During inference, we propose a Semantic-based Aggregating Mechanism to predict a sample without task ID or a sample from unseen tasks. Specifically, the unified model is decoupled into task experts by task triggers at the inference stage. For a test sample from a seen task with a known task ID, we choose the corresponding task expert to predict. For a test sample from an unseen task or without task ID, we input the sample into all task experts, and the output logits are added with weights calculated by semantic matching, using memory-stored prototypes computed during the training stage as the next paragraph shows.

\paragraph{Computing Prototypes.} The process of Computing Prototypes is illustrated in \ref{computing_prototypes}. For each category in each task, we save its prototype during training. The prototype of the $k$-th category in the $i$-th task is the mean of the image feature vectors plus the text feature vector for that category, extracted by the pre-trained VLM, that is 
$
    P^i_k =  f(y,\theta^0) + \frac{1}{|\mathcal{D}^t_k|}\sum_{m=1}^{|\mathcal{D}^t_k|}f(x_m,\theta^0),
$
where $\mathcal{D}^t_k$ is the dataset of the $k$-th category in the $i$-th task, $y$ is the text of this category, and $x_m$ is the $m$-th image of $\mathcal{D}^t_k$. Then we will introduce how the aggregating weights are calculated by these prototypes and how they are utilized to aggregate predictions.

\begin{wrapfigure}[26]{r}{0.26\textwidth}
\vspace{-11pt}
    \centering
    \includegraphics[width=\linewidth]{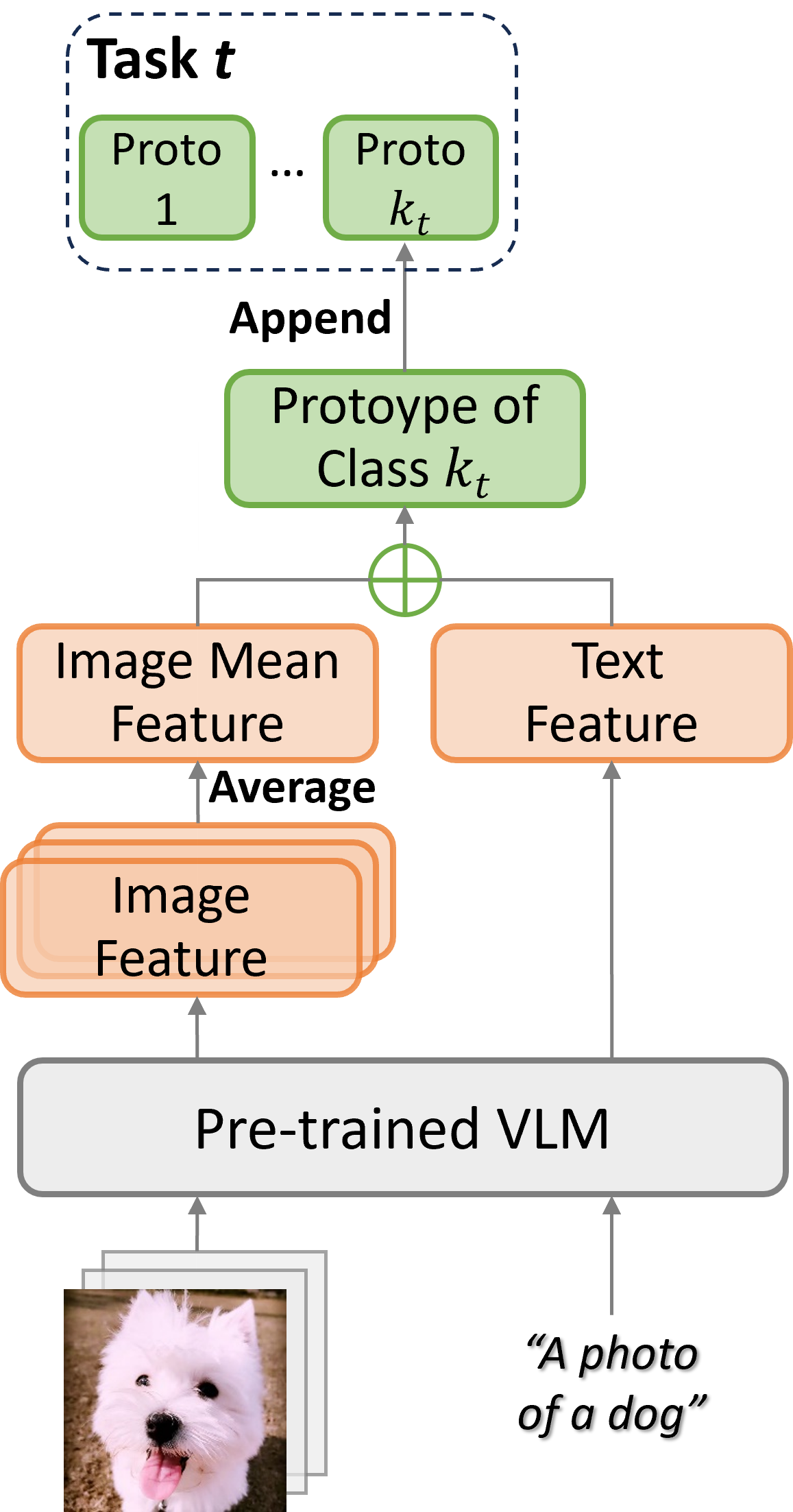} 
    \caption{The process of Computing Prototypes: The prototype of each category is the mean of the image feature vectors plus the text feature vector for that category, all extracted by the original pre-trained VLM.}
    \label{computing_prototypes}
\end{wrapfigure}

\paragraph{Aggregating Predictions.} The process of Aggregating Predictions is illustrated in Figure \ref{whole_method}. For a test image $x$, we use $f(\cdot,\theta^0)$ to extract the image feature. We then calculate the cosine similarity between the test image feature and the learned prototypes of different tasks. For each task, we select the highest similarity score as the weight of this task expert for the test sample. 
We compare the weights across different task experts and reassign the weights of the K-highest tasks to 1, and the others to 0. The output logits of all task experts are added with these weights to determine the final prediction result. This value $K$ is the only hyperparameter that needs to be determined in our method, and the ablation study of $K$ is in Appendix \ref{append:D}, which shows that the performance of our method is very insensitive to the choice of $K$. The inference time of Aggregating Predictions is very close to that of inference by a single model, since the time taken by the model selection phase is almost negligible, and the forward propagation of multiple task experts can be computed in parallel (see Appendix \ref{append:K} for a detailed analysis of inference time).

\section{Experiments}
\subsection{Experiment Setting}\label{Experiment Setting}
We apply ConDU to two fine-tuning scenarios. \textbf{ConDU (FT)}: Apply ConDU to full parameter fine-tuning. \textbf{ConDU (LoRA)}: Freeze the pre-trained model parameters and only fine-tune the parameters of LoRA modules during the ConDU process. LoRA is one of the most commonly used PEFT modules, and its detailed introduction is in Appendix \ref{append:A}. We test our method on three benchmarks, including Multi-domain Task Incremental Learning (MTIL) \citep{zscl}, task-agnostic MTIL, and few-shot MTIL. The MTIL \citep{zscl} extends task incremental learning to a cross-domain setting, where each task is derived from a distinct domain, including 11 individual tasks. As in \citep{zscl,moe,ma,rail,mulki}, the tasks are arranged alphabetically, and we further report how ConDU outperforms SOTA methods under another task order in Appendix \ref{append:G}. The task-agnostic MTIL is the variant of the MTIL benchmark, where the task ID is unknown during inference for each test sample. The few-shot MTIL variant involves training with only five train samples per category for each task. We follow the existing works \citep{zscl,moe,ma,coleclip} to use three key metrics for evaluation. The ``Transfer'' metric evaluates the model's zero-shot transfer performance on subsequent tasks. The ``Average'' metric evaluates the model’s performance on specific task averaged over all sessions, regardless of whether the task has already been encountered in a particular session. The ``Last'' metric reflects the model's average performance at the end of the continual learning process. In the task-agnostic MTIL setting, we omit the ``Transfer'' and focus solely on the ``Average'' and ``Last''. More implementation details and description of baselines are provided in Appendix \ref{Detailed Experiment Setting}. To explore the generality of the ConDU framework beyond vision-language models, we also conducted additional experiments on single-modality class-incremental learning in Appendix \ref{class_incremental}.

\subsection{Comparison with State-of-the-art Methods}

\paragraph{Multi-Domain Task Incremental Learning.} Table \ref{mtil} presents the detailed comparison results of our proposed ConDU and the baselines on the MTIL benchmark. As seen, our method outperforms all baseline methods across all three metrics. The ``Transfer'' metric of our method is 70.8\% for FT and 70.3\% for LoRA, which exceeds the best baseline by 0.7\% and surpasses the pre-trained VLM by 5.5\%. The ``Average'' metric of our method is 78.8\% for FT and 78.3\% for LoRA, which exceeds the best baseline by 1.5\% and surpasses the pre-trained VLM by 13.5\%. The ``Last'' metric of our method is 87.1\% for FT and 86.2\% for LoRA, which exceeds the best baseline by 0.2\% and surpasses the pre-trained VLM by 21.9\%. These results highlight our approach's effectiveness in mitigating catastrophic forgetting while progressively incorporating new knowledge.

\paragraph{Task-Agnostic MTIL.} Table \ref{task_agnostic} presents the detailed comparison results of our proposed ConDU and the baselines on the task-agnostic MTIL benchmark. As seen, our method outperforms all baseline methods across all two metrics. The ``Average'' metric of our method is 78.1\% for FT and 78.0\% for LoRA, which exceeds the best baseline by 2\% and surpasses pre-trained VLM by 20.3\%. The ``Last'' metric of our method is 86.4\% for FT and 85.1\% for LoRA, which exceeds the best baseline by 1.8\% and surpasses the pre-trained VLM by 28.6\%. These results highlight our approach's effectiveness in mitigating catastrophic forgetting while progressively incorporating new knowledge even without a task ID.

\paragraph{Few-Shot MTIL.} Table \ref{few_shot} presents the comparison results of ConDU and the baselines on the few-shot MTIL benchmark. The ``Transfer'' metric of our method is 70.0\% for FT and 70.3\% for LoRA, which exceeds the best baseline by 1.4\% and surpasses the pre-trained VLM by 4.7\%. The ``Average'' metric of our method is 72.3\% for FT and 72.7\% for LoRA, which exceeds the best baseline by 1.3\% and surpasses the pre-trained VLM by 7.4\%. The ``Last'' metric of our method is 76.6\% for FT and 77.4\% for LoRA, which exceeds the best baseline by 1.3\% and surpasses the pre-trained VLM by 12.1\%. These results highlight our approach's effectiveness in mitigating catastrophic forgetting while progressively incorporating new knowledge even with very few samples.

\begin{table*}[t]
    \caption{Comparison with SOTA methods on MTIL benchmark in terms of ``Transfer'', ``Average'', and ``Last'' scores (\%). We label the best methods on average of all datasets with \textbf{bold} styles. The lines with background color represent our methods. The results of more baselines can be found in Appendix \ref{append:E}.}\label{mtil}
    \centering
    \resizebox{1.0\linewidth}{!}{
    \begin{tabular}{c>{\raggedright\arraybackslash}p{3cm} >{\centering\arraybackslash}p{1cm} >{\centering\arraybackslash}p{1cm}>{\centering\arraybackslash}p{1cm} >{\centering\arraybackslash}p{1cm} >{\centering\arraybackslash}p{1cm}>{\centering\arraybackslash}p{1cm} >{\centering\arraybackslash}p{1cm} >{\centering\arraybackslash}p{1cm}>{\centering\arraybackslash}p{1cm} >{\centering\arraybackslash}p{1cm} >{\centering\arraybackslash}p{1cm} |>{\centering\arraybackslash}p{1.5cm}}
        \toprule
           & {\hspace{1em}} Method & \makecell[c]{\rotatebox{90}{Aircraft~}} & \makecell[c]{\rotatebox{90}{Caltech101~}} & \makecell[c]{\rotatebox{90}{CIFAR100~}} & \makecell[c]{\rotatebox{90}{DTD~}} & \makecell[c]{\rotatebox{90}{EuroSAT~}} & \makecell[c]{\rotatebox{90}{Flowers~}} & \makecell[c]{\rotatebox{90}{Food~}} & \makecell[c]{\rotatebox{90}{MNIST~}} & \makecell[c]{\rotatebox{90}{OxfordPet~}} & \makecell[c]{\rotatebox{90}{Cars~}} & \makecell[c]{\rotatebox{90}{SUN397~}} & \makecell[c]{{\textit{Average}}} \\

        \midrule
        
            \multirow{2}{*}{\rotatebox{90}{}}& {\hspace{1em}}Zero-shot & 24.3 & 88.4 & 68.2 & 44.6 & 54.9 & 71.0 & 88.5 & 59.4 & 89.0 & 64.7 & 65.2 & 65.3  \\
            & {\hspace{1em}}Individual FT & 62.0 & 95.1 & 89.6 & 79.5 & 98.9 & 97.5 & 92.7 & 99.6 & 94.7 & 89.6 & 81.8 & 89.2  \\
            
            \midrule\midrule
            \multirow{6}{*}{\rotatebox{90}{\textbf{Transfer}}} 
            & {\hspace{1em}}ZSCL & - & 86.0 & 67.4 & 45.4 & 50.4 & 69.1 & 87.6 & 61.8 & 86.8 & 60.1 & 66.8 & 68.1 \\
            & {\hspace{1em}}Dual-RAIL & - & 88.4 & 68.2 & 44.6 & 54.9 & 71.0 & 88.5 & 59.6 & 89.0 & 64.7 & 65.2 & 69.4\\
            & {\hspace{1em}}DPeCLIP & - & 88.2 & 67.2 & 44.7 & 54.0 & 70.6 & 88.2 & 59.5 & 89.0 & 64.7 & 64.8 & 69.1 \\
            & {\hspace{1em}}MulKI  & - & 87.8 & 69.0 & 46.7 & 51.8 & 71.3 & 88.3 & 64.7 & 89.7 & 63.4 & 68.1 & 70.1 \\
            &\cellcolor{cellcolor} {\hspace{1em}}ConDU (LoRA) &\cellcolor{cellcolor} - &\cellcolor{cellcolor} 88.1 &\cellcolor{cellcolor} 68.9 &\cellcolor{cellcolor} 45.7 &\cellcolor{cellcolor} 57.0 &\cellcolor{cellcolor} 71.3 &\cellcolor{cellcolor} 88.8 &\cellcolor{cellcolor} 61.2 &\cellcolor{cellcolor} 89.3 &\cellcolor{cellcolor} 65.1 &\cellcolor{cellcolor} 67.8 &\cellcolor{cellcolor} 70.3\\
            &\cellcolor{cellcolor} {\hspace{1em}}ConDU (FT) &\cellcolor{cellcolor} - &\cellcolor{cellcolor} 88.1 &\cellcolor{cellcolor} 68.9 &\cellcolor{cellcolor} 46.4 &\cellcolor{cellcolor} 57.1 &\cellcolor{cellcolor} 71.4 &\cellcolor{cellcolor} 88.7 &\cellcolor{cellcolor} 65.5 &\cellcolor{cellcolor} 89.3 &\cellcolor{cellcolor} 65.0 &\cellcolor{cellcolor} 67.8 &\cellcolor{cellcolor} \textbf{70.8
            }\\
            % ----------------------------------------------------
            \midrule 
            \multirow{6}{*}{\rotatebox{90}{\textbf{Average}}}
            & {\hspace{1em}}ZSCL & 45.1 & 92.0 & 80.1 & 64.3 & 79.5 & 81.6 & 89.6 & 75.2 & 88.9 & 64.7 & 68.0 & 75.4 \\
            & {\hspace{1em}}Dual-RAIL & 52.5 & 96.0 & 80.6 & 70.4 & 81.3 & 86.3 & 89.1 & 73.9 & 90.2 & 68.5 & 66.5 & 77.8 \\
            & {\hspace{1em}}DPeCLIP & 49.9 & 94.9 & 82.4 & 69.4 & 82.2 & 84.3 & 90.0	& 74.0 & 90.4 & 68.3 & 66.3	& 77.5 \\
            & {\hspace{1em}}MulKI & 52.5 & 93.6 & 79.4 & 67.0 & 79.8 & 83.9 & 89.6 & 77.1 & 91.2 & 67.1 & 69.1 & 77.3 \\
            &\cellcolor{cellcolor} {\hspace{1em}}ConDU (LoRA) &\cellcolor{cellcolor} 51.9 &\cellcolor{cellcolor} 94.9 &\cellcolor{cellcolor} 84.4 &\cellcolor{cellcolor} 69.8 &\cellcolor{cellcolor} 81.1 &\cellcolor{cellcolor} 84.4 &\cellcolor{cellcolor} 90.0 &\cellcolor{cellcolor} 77.3 &\cellcolor{cellcolor} 89.5 &\cellcolor{cellcolor} 69.0 &\cellcolor{cellcolor} 69.3 &\cellcolor{cellcolor} 78.3 \\
            &\cellcolor{cellcolor} {\hspace{1em}}ConDU (FT) &\cellcolor{cellcolor} 59.6 &\cellcolor{cellcolor} 93.4 &\cellcolor{cellcolor} 83.7 &\cellcolor{cellcolor} 68.1 &\cellcolor{cellcolor} 83.4 &\cellcolor{cellcolor} 83.7 &\cellcolor{cellcolor} 90.1 &\cellcolor{cellcolor} 76.7 &\cellcolor{cellcolor} 90.6 &\cellcolor{cellcolor} 68.6 &\cellcolor{cellcolor} 68.6 &\cellcolor{cellcolor} \textbf{78.8} \\
            \midrule
            % ----------------------------------------------------
            \multirow{6}{*}{\rotatebox{90}{\textbf{Last}}}
            & {\hspace{1em}}ZSCL & 40.6 & 92.2 & 81.3 & 70.5 & 94.8 & 90.5 & 91.9 & 98.7 & 93.9 & 85.3 & 80.2 & 83.6 \\
            & {\hspace{1em}}Dual-RAIL & 52.5 & 96.8 & 83.3 & 80.1 & 96.4 & 99.0 & 89.9 & 98.8 & 93.5 & 85.5 & 79.2 & 86.8 \\
            & {\hspace{1em}}DPeCLIP & 49.9 & 95.6 & 85.8 & 78.6 & 98.4 & 95.8 & 92.1 & 99.4 & 94.0 & 84.5 & 81.7 & 86.9 \\
            & {\hspace{1em}}MulKI & 49.7 & 93.0 & 82.8 & 73.7 & 96.2 & 92.3 & 90.4 & 99.0& 94.8 & 85.2 & 78.9 & 85.1 \\
            &\cellcolor{cellcolor} {\hspace{1em}}ConDU (LoRA) &\cellcolor{cellcolor} 48.9 &\cellcolor{cellcolor} 95.2 &\cellcolor{cellcolor} 87.8 &\cellcolor{cellcolor} 78.5 &\cellcolor{cellcolor} 96.3 &\cellcolor{cellcolor} 95.2 &\cellcolor{cellcolor} 91.7 &\cellcolor{cellcolor} 97.6 &\cellcolor{cellcolor} 93.0 &\cellcolor{cellcolor} 85.3 &\cellcolor{cellcolor} 78.8 &\cellcolor{cellcolor} 86.2 \\ 
            &\cellcolor{cellcolor} {\hspace{1em}}ConDU (FT) &\cellcolor{cellcolor} 58.6 &\cellcolor{cellcolor} 93.7 &\cellcolor{cellcolor} 86.6 &\cellcolor{cellcolor} 76.1 &\cellcolor{cellcolor} 98.2 &\cellcolor{cellcolor} 93.4 &\cellcolor{cellcolor} 91.9 &\cellcolor{cellcolor} 99.6 &\cellcolor{cellcolor} 94.8 &\cellcolor{cellcolor} 84.9 &\cellcolor{cellcolor} 80.5 &\cellcolor{cellcolor} \textbf{87.1} \\
        \bottomrule
    \end{tabular}}
     \vspace{-11pt}
\end{table*}
\begin{table*}[t] %htbp
    \caption{Comparison with SOTA methods on task-agnostic MTIL benchmark in terms of ``Transfer'', ``Average'', and ``Last'' scores (\%). We label the best methods on average of all datasets with \textbf{bold} styles. The lines with background color represent our methods. Individual FT can not be utilized on task-agnostic MTIL, so the Individual FT results here is the prediction with task ID while other methods cannot know the task ID.}\label{task_agnostic}
    \centering
    \resizebox{1.0\linewidth}{!}{
    \begin{tabular}{c>{\raggedright\arraybackslash}p{3cm} >{\centering\arraybackslash}p{1cm} >{\centering\arraybackslash}p{1cm}>{\centering\arraybackslash}p{1cm} >{\centering\arraybackslash}p{1cm} >{\centering\arraybackslash}p{1cm}>{\centering\arraybackslash}p{1cm} >{\centering\arraybackslash}p{1cm} >{\centering\arraybackslash}p{1cm}>{\centering\arraybackslash}p{1cm} >{\centering\arraybackslash}p{1cm} >{\centering\arraybackslash}p{1cm} |>{\centering\arraybackslash}p{1.5cm}}
        \toprule
           & {\hspace{1em}} Method & \makecell[c]{\rotatebox{90}{Aircraft~}} & \makecell[c]{\rotatebox{90}{Caltech101~}} & \makecell[c]{\rotatebox{90}{CIFAR100~}} & \makecell[c]{\rotatebox{90}{DTD~}} & \makecell[c]{\rotatebox{90}{EuroSAT~}} & \makecell[c]{\rotatebox{90}{Flowers~}} & \makecell[c]{\rotatebox{90}{Food~}} & \makecell[c]{\rotatebox{90}{MNIST~}} & \makecell[c]{\rotatebox{90}{OxfordPet~}} & \makecell[c]{\rotatebox{90}{Cars~}} & \makecell[c]{\rotatebox{90}{SUN397~}} & \makecell[c]{{\textit{Average}}} \\

        \midrule
        
            \multirow{2}{*}{\rotatebox{90}{}}& {\hspace{1em}}Zero-shot &24.4 & 63.7 & 41.0 & 39.3 & 53.0 & 70.0 & 88.4 & 39.6 & 88.9 & 64.5 & 63.3 & 57.8 \\
            & {\hspace{1em}}Individual FT & 62.0 & 95.1 & 89.6 & 79.5 & 98.9 & 97.5 & 92.7 & 99.6 & 94.7 & 89.6 & 81.8 & 89.2 \\
            
            \midrule\midrule      
            \multirow{9}{*}{\rotatebox{90}{\textbf{Average}}}
            & {\hspace{1em}}Continual FT & 25.5 & 81.5 & 59.1 & 53.2 & 64.7 & 51.8 & 63.2 & 64.3 & 69.7 & 31.8 & 49.7 & 55.9 \\
            & {\hspace{1em}}ZSCL & 46.3	& 68.3 & 74.3 & 56.3 & 79.1 & 81.4 & 89.5 & 74.0 & 89.0 & 64.4 & 67.5 & 71.8 \\
            & {\hspace{1em}}MoE & 37.2 & 65.3 & 79.5 & 67.6 & 19.7 & 83.1 & 80.5 & 74.0 & 88.5 & 67.5 & 65.3 & 66.2 \\
            & {\hspace{1em}}Primal-RAIL & 42.4 & 88.5 & 57.1 & 55.7 & 64.7 & 80.7 & 83.0 & 62.9 & 84.8 & 68.7 & 63.7 & 68.4 \\
            & {\hspace{1em}}Dual-RAIL & 45.0 & 88.8 & 57.8 & 56.8 & 66.2 & 81.0 & 85.2 & 63.4 & 87.8 & 68.9 & 64.7 & 69.6 \\      
            & {\hspace{1em}}CoLeCLIP & 48.2	& 77.8 & 71.7 & 65.7 & 76.8 & 83.8 & 89.6 & 72.2 & 90.3 & 68.0 & 66.4 & 73.7 \\
            & {\hspace{1em}}DPeCLIP & 49.9 & 85.3 & 81.5 & 65.3 & 81.6 & 84.3 & 89.9 & 74.0 & 90.4 & 68.3 & 66.2	& 76.1 \\
            &\cellcolor{cellcolor} {\hspace{1em}}ConDU (LoRA) &\cellcolor{cellcolor} 51.8 &\cellcolor{cellcolor} 94.4 &\cellcolor{cellcolor} 84.2 &\cellcolor{cellcolor} 68.8 &\cellcolor{cellcolor} 80.0 &\cellcolor{cellcolor} 84.1 &\cellcolor{cellcolor} 90.0 &\cellcolor{cellcolor} 77.1 &\cellcolor{cellcolor} 88.9 &\cellcolor{cellcolor} 68.8 &\cellcolor{cellcolor} 69.3 &\cellcolor{cellcolor} 78.0 \\
            &\cellcolor{cellcolor} {\hspace{1em}}ConDU (FT) &\cellcolor{cellcolor} 59.7 &\cellcolor{cellcolor} 90.4 &\cellcolor{cellcolor} 83.6 &\cellcolor{cellcolor} 67.0 &\cellcolor{cellcolor} 81.8 &\cellcolor{cellcolor} 83.6 &\cellcolor{cellcolor} 90.2 &\cellcolor{cellcolor} 75.0 &\cellcolor{cellcolor} 90.8 &\cellcolor{cellcolor} 68.7 &\cellcolor{cellcolor} 68.4 &\cellcolor{cellcolor} \textbf{78.1} \\
            \midrule
            % ----------------------------------------------------
            \multirow{9}{*}{\rotatebox{90}{\textbf{Last}}}
            & {\hspace{1em}}Continual FT & 31.0 & 89.3 & 65.8 & 67.3 & 88.9 & 71.1 & 85.6 & 99.6 & 92.9 & 77.3 & 81.1 & 77.3 \\
            & {\hspace{1em}}ZSCL & 42.5 & 64.4 & 67.2 & 54.8 & 89.7 & 90.4 & 91.7	& 95.8 & 93.4 & 85.2 & 78.3	& 77.6 \\
            & {\hspace{1em}}MoE & 34.1 & 47.6 & 80.9 & 75.5	& 0.0 & 93.0 & 70.8 & 99.4 & 86.4 & 79.8	& 68.9 & 66.9\\
            & {\hspace{1em}}Primal-RAIL & 41.9 & 94.0 & 73.7 & 67.8 & 84.4 & 97.0 & 83.4 & 92.6 & 86.9 & 75.7 & 71.4 & 79.0 \\
            & {\hspace{1em}}Dual-RAIL & 45.2 & 94.4 & 74.7 & 70.7 & 87.3 & 97.9 & 86.5 & 92.8 & 91.9 & 81.7 & 76.7 & 81.8 \\           
            & {\hspace{1em}}CoLeCLIP & 48.1	& 73.1 & 65.2 & 69.6 & 84.0	& 96.2 & 90.9 & 94.6 & 93.5 & 82.6 & 79.3	& 79.7 \\
            &{\hspace{1em}}DPeCLIP & 49.9 & 84.2 & 83.2 & 71.1 & 97.0 & 95.8 & 92.0 & 99.4 & 93.9 & 84.5 & 80.2 & 84.6 \\
            &\cellcolor{cellcolor} {\hspace{1em}}ConDU (LoRA) &\cellcolor{cellcolor} 48.4 &\cellcolor{cellcolor} 94.4 &\cellcolor{cellcolor} 87.3 &\cellcolor{cellcolor} 77.1 &\cellcolor{cellcolor} 94.1 &\cellcolor{cellcolor} 94.3 &\cellcolor{cellcolor} 90.8 &\cellcolor{cellcolor} 96.2 &\cellcolor{cellcolor} 90.8 &\cellcolor{cellcolor} 84.3 &\cellcolor{cellcolor} 78.1 &\cellcolor{cellcolor} 85.1 \\
            &\cellcolor{cellcolor} {\hspace{1em}}ConDU (FT)  &\cellcolor{cellcolor} 58.6 &\cellcolor{cellcolor} 90.8 &\cellcolor{cellcolor} 86.3 &\cellcolor{cellcolor} 74.0 &\cellcolor{cellcolor} 96.3 &\cellcolor{cellcolor} 93.4 &\cellcolor{cellcolor} 91.9 &\cellcolor{cellcolor} 99.6 &\cellcolor{cellcolor} 94.7 &\cellcolor{cellcolor} 84.9 &\cellcolor{cellcolor} 80.1 &\cellcolor{cellcolor} \textbf{86.4} \\

        \bottomrule
    \end{tabular}}
    %\vspace{-11pt}
\end{table*}
\begin{table*}[ht]
    \caption{Comparison with SOTA methods on few-shot MTIL benchmark in terms of ``Transfer'', ``Average'', and ``Last'' scores (\%). We label the best methods on average of all datasets with \textbf{bold} styles. The lines with background color represent our methods. The results of more baselines can be found in Appendix \ref{append:E}.}\label{few_shot}
    \centering
    \resizebox{1.0\linewidth}{!}{
    \begin{tabular}{c>{\raggedright\arraybackslash}p{3cm} >{\centering\arraybackslash}p{1cm} >{\centering\arraybackslash}p{1cm}>{\centering\arraybackslash}p{1cm} >{\centering\arraybackslash}p{1cm} >{\centering\arraybackslash}p{1cm}>{\centering\arraybackslash}p{1cm} >{\centering\arraybackslash}p{1cm} >{\centering\arraybackslash}p{1cm}>{\centering\arraybackslash}p{1cm} >{\centering\arraybackslash}p{1cm} >{\centering\arraybackslash}p{1cm} |>{\centering\arraybackslash}p{1.5cm}}
        \toprule
           & {\hspace{1em}} Method & \makecell[c]{\rotatebox{90}{Aircraft~}} & \makecell[c]{\rotatebox{90}{Caltech101~}} & \makecell[c]{\rotatebox{90}{CIFAR100~}} & \makecell[c]{\rotatebox{90}{DTD~}} & \makecell[c]{\rotatebox{90}{EuroSAT~}} & \makecell[c]{\rotatebox{90}{Flowers~}} & \makecell[c]{\rotatebox{90}{Food~}} & \makecell[c]{\rotatebox{90}{MNIST~}} & \makecell[c]{\rotatebox{90}{OxfordPet~}} & \makecell[c]{\rotatebox{90}{Cars~}} & \makecell[c]{\rotatebox{90}{SUN397~}} & \makecell[c]{{\textit{Average}}} \\

        \midrule
        
            \multirow{2}{*}{\rotatebox{90}{}}
            & {\hspace{1em}}Zero-shot & 24.3 & 88.4 & 68.2 & 44.6 & 54.9 & 71.0 & 88.5 & 59.6 & 89.0 & 64.7 & 65.2 & 65.3 \\
            & {\hspace{1em}}Individual FT & 30.6 & 93.5 & 76.8 & 65.1 & 91.7 & 92.9 & 83.3 & 96.6 & 84.9 & 65.4 & 71.3 & 77.5 \\
            
            \midrule\midrule
            \multirow{7}{*}{\rotatebox{90}{\textbf{Transfer}}}
            & {\hspace{1em}}Continual FT & - & 72.8 & 53.0 & 36.4 & 35.4 & 43.3 & 68.4 & 47.4 & 72.6 & 30.0 & 52.7 & 51.2 \\
            & {\hspace{1em}}WiSE-FT & - & 77.6 & 60.0 & 41.3 & 39.4 & 53.0 & 76.6 & 58.1 & 75.5 & 37.3 & 58.2 & 57.7 \\
            & {\hspace{1em}}ZSCL & - & 84.0 & 68.1 & 44.8 & 46.8 & 63.6 & 84.9 & 61.4 & 81.4 & 55.5 & 62.2 & 65.3 \\
            & {\hspace{1em}}MoE & - & 87.9 & 68.2 & 44.1 & 48.1 & 64.7 & 88.8 & 69.0 & 89.1 & 64.5 & 65.1 & 68.9 \\
            &\cellcolor{cellcolor} {\hspace{1em}}ConDU (FT) &\cellcolor{cellcolor} - &\cellcolor{cellcolor} 88.0 &\cellcolor{cellcolor} 69.5 &\cellcolor{cellcolor} 45.6 &\cellcolor{cellcolor} 54.4 &\cellcolor{cellcolor} 71.1 &\cellcolor{cellcolor} 88.7 &\cellcolor{cellcolor} 62.2 &\cellcolor{cellcolor} 88.9 &\cellcolor{cellcolor} 64.4 &\cellcolor{cellcolor} 66.6 &\cellcolor{cellcolor} 70.0 \\
            &\cellcolor{cellcolor} {\hspace{1em}}ConDU (LoRA) &\cellcolor{cellcolor} - &\cellcolor{cellcolor} 88.1 &\cellcolor{cellcolor} 68.5 &\cellcolor{cellcolor} 45.6 &\cellcolor{cellcolor} 56.4 &\cellcolor{cellcolor} 71.2 &\cellcolor{cellcolor} 89.0 &\cellcolor{cellcolor} 64.0 &\cellcolor{cellcolor} 88.8 &\cellcolor{cellcolor} 64.9 &\cellcolor{cellcolor} 66.4 &\cellcolor{cellcolor} \textbf{70.3} \\

            % --------------------------
            \midrule 
            \multirow{7}{*}{\rotatebox{90}{\textbf{Average}}}
            & {\hspace{1em}}Continual FT & 28.1 & 86.4 & 59.1 & 52.8 & 55.8 & 62.0 & 70.2 & 64.7 & 75.5 & 35.0 & 54.0 & 58.5 \\
            & {\hspace{1em}}WiSE-FT& 32.0 & 87.7 & 61.0 & 55.8 & 68.1 & 69.3 & 76.8 & 71.5 & 77.6 & 42.0 & 59.3 & 63.7 \\
            & {\hspace{1em}}ZSCL& 28.2 & 88.6 & 66.5 & 53.5 & 56.3 & 73.4 & 83.1 & 56.4 & 82.4 & 57.5 & 62.9 & 64.4 \\
            & {\hspace{1em}}MoE & 30.0 & 89.6 & 73.9 & 58.7 & 69.3 & 79.3 & 88.1 & 76.5 & 89.1 & 65.3 & 65.8 & 71.4 \\
            &\cellcolor{cellcolor} {\hspace{1em}}ConDU (FT) &\cellcolor{cellcolor} 33.1 &\cellcolor{cellcolor} 90.5 &\cellcolor{cellcolor} 74.1 &\cellcolor{cellcolor} 58.3 &\cellcolor{cellcolor} 76.2 &\cellcolor{cellcolor} 81.0 &\cellcolor{cellcolor} 87.9 &\cellcolor{cellcolor} 73.4 &\cellcolor{cellcolor} 88.0 &\cellcolor{cellcolor} 64.8 &\cellcolor{cellcolor} 67.1 &\cellcolor{cellcolor} 72.3 \\
            &\cellcolor{cellcolor} {\hspace{1em}}ConDU (LoRA) &\cellcolor{cellcolor} 32.4 &\cellcolor{cellcolor} 92.1 &\cellcolor{cellcolor} 75.4 &\cellcolor{cellcolor} 58.8 &\cellcolor{cellcolor} 75.1 &\cellcolor{cellcolor} 82.9 &\cellcolor{cellcolor} 87.3 &\cellcolor{cellcolor} 74.0 &\cellcolor{cellcolor} 89.3 &\cellcolor{cellcolor} 65.1 &\cellcolor{cellcolor} 67.0 &\cellcolor{cellcolor} \textbf{72.7} \\

            \midrule
            % --------------------------
            \multirow{7}{*}{\rotatebox{90}{\textbf{Last}}}
            & {\hspace{1em}}Continual FT& 27.8 & 86.9 & 60.1 & 58.4 & 56.6 & 75.7 & 73.8 & 93.1 & 82.5 & 57.0 & 66.8 & 67.1 \\
            & {\hspace{1em}}WiSE-FT& 30.8 & 88.9 & 59.6 & 60.3 & 80.9 & 81.7 & 77.1 & 94.9 & 83.2 & 62.8 & 70.0 & 71.9 \\
            & {\hspace{1em}}ZSCL& 26.8 & 88.5 & 63.7 & 55.7 & 60.2 & 82.1 & 82.6 & 58.6 & 85.9 & 66.7 & 70.4 & 67.4 \\
            & {\hspace{1em}}MoE & 30.1 & 89.3 & 74.9 & 64.0 & 82.3 & 89.4 & 87.1 & 89.0 & 89.1 & 69.5 & 72.5 & 76.1 \\
            &\cellcolor{cellcolor} {\hspace{1em}}ConDU (FT) &\cellcolor{cellcolor} 33.3 &\cellcolor{cellcolor} 90.7 &\cellcolor{cellcolor} 75.0 &\cellcolor{cellcolor} 63.1 &\cellcolor{cellcolor} 88.8 &\cellcolor{cellcolor} 88.6 &\cellcolor{cellcolor} 87.0 &\cellcolor{cellcolor} 91.8 &\cellcolor{cellcolor} 85.6 &\cellcolor{cellcolor} 66.5 &\cellcolor{cellcolor} 71.9 &\cellcolor{cellcolor} 76.6\\
            &\cellcolor{cellcolor} {\hspace{1em}}ConDU (LoRA)  &\cellcolor{cellcolor} 31.8 &\cellcolor{cellcolor} 92.4 &\cellcolor{cellcolor} 76.7 &\cellcolor{cellcolor} 63.4 &\cellcolor{cellcolor} 86.8 &\cellcolor{cellcolor} 91.8 &\cellcolor{cellcolor} 85.6 &\cellcolor{cellcolor} 93.9 &\cellcolor{cellcolor} 90.3 &\cellcolor{cellcolor} 68.1 &\cellcolor{cellcolor} 70.9 &\cellcolor{cellcolor} 77.4 \\

        \bottomrule
    \end{tabular}}
     \vspace{-5pt}
\end{table*}

\subsection{Ablation Study}
 
\paragraph{PTM vs. Task Expert Features.} We compare prototype–sample similarity computed via (a) shared PTM features and (b) task-specific expert features. In the latter variant, prototypes and test samples are mapped into the feature space of the corresponding task expert for similarity computation. Results show that our PTM-based strategy consistently outperforms the expert-based approach. This indicates that the frozen PTM provides a more unified and reliable representation for cross-task similarity compared to disjoint expert-specific spaces.

\paragraph{The Effect of Rescalers.}
We compared ConDU (FT) with its no-rescaler variant. The variant performs substantially worse, confirming the necessity of rescaling. Importantly, without the rescaler, the reconstructed task experts will produce features with significantly mismatched magnitudes compared to the original task experts, motivating the inclusion of rescalers.

\begin{table*}[ht]
    \caption{Comparison of ``Transfer'' and ``Average'' metrics on MTIL benchmark between PTM-based and Expert-based feature extraction approaches.}\label{extracted_feature}
    \centering
    \resizebox{1.0\linewidth}{!}{
    \begin{tabular}{c>{\raggedright\arraybackslash}p{3cm} >{\centering\arraybackslash}p{1cm} >{\centering\arraybackslash}p{1cm}>{\centering\arraybackslash}p{1cm} >{\centering\arraybackslash}p{1cm} >{\centering\arraybackslash}p{1cm}>{\centering\arraybackslash}p{1cm} >{\centering\arraybackslash}p{1cm} >{\centering\arraybackslash}p{1cm}>{\centering\arraybackslash}p{1cm} >{\centering\arraybackslash}p{1cm} >{\centering\arraybackslash}p{1cm} |>{\centering\arraybackslash}p{1.5cm}}
        \toprule
           & {\hspace{1em}} Method & \makecell[c]{\rotatebox{90}{Aircraft~}} & \makecell[c]{\rotatebox{90}{Caltech101~}} & \makecell[c]{\rotatebox{90}{CIFAR100~}} & \makecell[c]{\rotatebox{90}{DTD~}} & \makecell[c]{\rotatebox{90}{EuroSAT~}} & \makecell[c]{\rotatebox{90}{Flowers~}} & \makecell[c]{\rotatebox{90}{Food~}} & \makecell[c]{\rotatebox{90}{MNIST~}} & \makecell[c]{\rotatebox{90}{OxfordPet~}} & \makecell[c]{\rotatebox{90}{Cars~}} & \makecell[c]{\rotatebox{90}{SUN397~}} & \makecell[c]{{\textit{Average}}} \\

        \midrule

            \multirow{2}{*}{\textbf{Transfer}} 
            & {\hspace{1em}}Expert-based & - & 88.2 & 68.9 & 40.3 & 49.1 & 69.7 & 86.4 & 62.9 & 84.6 & 59.7 & 67.4 & 67.7 \\
            & {\hspace{1em}}PTM-based & - & 88.1 & 68.9 & 46.4 & 57.0 & 71.3 & 88.7 & 65.5 & 89.3 & 65.0 & 67.8 & 70.8\\
            % ----------------------------------------------------
            \midrule 
            \multirow{2}{*}{\textbf{Average}}
            & {\hspace{1em}}Expert-based & 59.6 & 93.4 & 83.7 & 67.5 & 81.7 & 82.7 & 89.1 & 77.0 & 88.2 & 65.1 & 68.3 & 77.8 \\
            & {\hspace{1em}}PTM-based & 59.6 & 93.4 & 83.7 & 68.1 & 83.4 & 83.7 & 90.1 & 76.7 & 90.6 & 68.6 & 68.6 & 78.8 \\
        \bottomrule
    \end{tabular}}
     \vspace{-5pt}
\end{table*}
\begin{table*}[ht]
    \caption{Comparison of ``Transfer'', ``Average'' and ``Last'' metrics on MTIL benchmark between ConDU (FT) and its no-rescaler variant.}\label{rescaler_ablation}
    \centering
    \resizebox{1.0\linewidth}{!}{
    \begin{tabular}{c>{\raggedright\arraybackslash}p{3cm} >{\centering\arraybackslash}p{1cm} >{\centering\arraybackslash}p{1cm}>{\centering\arraybackslash}p{1cm} >{\centering\arraybackslash}p{1cm} >{\centering\arraybackslash}p{1cm}>{\centering\arraybackslash}p{1cm} >{\centering\arraybackslash}p{1cm} >{\centering\arraybackslash}p{1cm}>{\centering\arraybackslash}p{1cm} >{\centering\arraybackslash}p{1cm} >{\centering\arraybackslash}p{1cm} |>{\centering\arraybackslash}p{1.5cm}}
        \toprule
           & {\hspace{1em}} Method & \makecell[c]{\rotatebox{90}{Aircraft~}} & \makecell[c]{\rotatebox{90}{Caltech101~}} & \makecell[c]{\rotatebox{90}{CIFAR100~}} & \makecell[c]{\rotatebox{90}{DTD~}} & \makecell[c]{\rotatebox{90}{EuroSAT~}} & \makecell[c]{\rotatebox{90}{Flowers~}} & \makecell[c]{\rotatebox{90}{Food~}} & \makecell[c]{\rotatebox{90}{MNIST~}} & \makecell[c]{\rotatebox{90}{OxfordPet~}} & \makecell[c]{\rotatebox{90}{Cars~}} & \makecell[c]{\rotatebox{90}{SUN397~}} & \makecell[c]{{\textit{Average}}} \\

        \midrule

            \multirow{2}{*}{\textbf{Transfer}} 
            & {\hspace{1em}}ConDU & - & 88.1 & 68.9 & 46.4 & 57.0 & 71.3 & 88.7 & 65.5 & 89.3 & 65.0 & 67.8 & 70.8\\
            & {\hspace{1em}}- w/o rescalers & - & 88.2 & 68.7 & 46.1 & 56.5 & 71.4 & 88.7 & 63.8 & 89.1 & 64.9 & 66.7 & 70.4 \\
            % ----------------------------------------------------
            \midrule 
            \multirow{2}{*}{\textbf{Average}}
             & {\hspace{1em}}ConDU & 59.6 & 93.4 & 83.7 & 68.1 & 83.4 & 83.7 & 90.1 & 76.7 & 90.6 & 68.6 & 68.6 & 78.8 \\
            & {\hspace{1em}}- w/o rescalers & 59.2 & 90.8 & 81.5 & 62.2 & 82.5 & 52.0 & 88.8 & 76.8 & 87.8 & 66.7 & 67.8 & 74.2 \\

            % ----------------------------------------------------
            \midrule 
            \multirow{2}{*}{\textbf{Last}}
            & {\hspace{1em}}ConDU & 58.6 & 93.7 & 86.6 & 76.1 & 98.2 & 93.4 & 91.9 & 99.6 & 94.8 & 84.9 & 80.5 & 87.1 \\
            & {\hspace{1em}}- w/o rescalers & 57.8 & 89.9 & 83.1 & 66.2 & 96.9 & 30.3 & 88.7 & 99.6 & 84.4 & 74.3 & 77.1 & 77.1 \\
        \bottomrule
    \end{tabular}}
     \vspace{-5pt}
\end{table*}

\section{Discussion}
\paragraph{Hardware Robustness.} We conducted experiments (ConDU FT) on NVIDIA RTX 4090 and Huawei Ascend 910B to verify the hardware robustness. As shown in Table \ref{tab:performance_comparison}, the discrepancy remains negligible.

\begin{wraptable}{r}{0.45\textwidth}
    \centering
    \vspace{-10pt} 
    \caption{Performance comparison across different hardware platforms.}
    \label{tab:performance_comparison}
    \footnotesize
    \begin{tabular}{lccc}
        \toprule
        & \textbf{Transfer} & \textbf{Average} & \textbf{Last} \\
        \midrule
        Ascend 910B  & 70.2 & 79.0 & 87.0 \\
        RTX 4090     & 70.8 & 78.8 & 87.1 \\
        \bottomrule
    \end{tabular}
    \vspace{-10pt}
\end{wraptable}

\paragraph{Computational Cost and Storage Analysis.}
We evaluate the efficiency of ConDU by comparing learnable parameters with SOTA methods (Figure~\ref{Learnable_Parameters}). ConDU (LoRA) requires significantly fewer parameters while achieving superior performance, whereas ConDU (FT) delivers the best performance among full-parameter update methods with a comparable parameter count. Regarding storage, ConDU significantly alleviates the overhead of Individual FT, with efficiency gains scaling as tasks and fine-tunable parameters increase (Appendix \ref{append:B}). Furthermore, ConDU maintains competitive training and inference times (Appendix \ref{append:K}): it matches Continual FT in training efficiency, saves 62\% time compared to ZSCL \citep{zscl}, and retains inference speeds comparable to a single model.

\begin{figure}[h]
\centering
\noindent
\begin{minipage}{\textwidth}
\centering
\hfill
\begin{minipage}{0.48\textwidth}
    \centering
    \adjustbox{height=5.5cm, keepaspectratio, valign=t}{\includegraphics{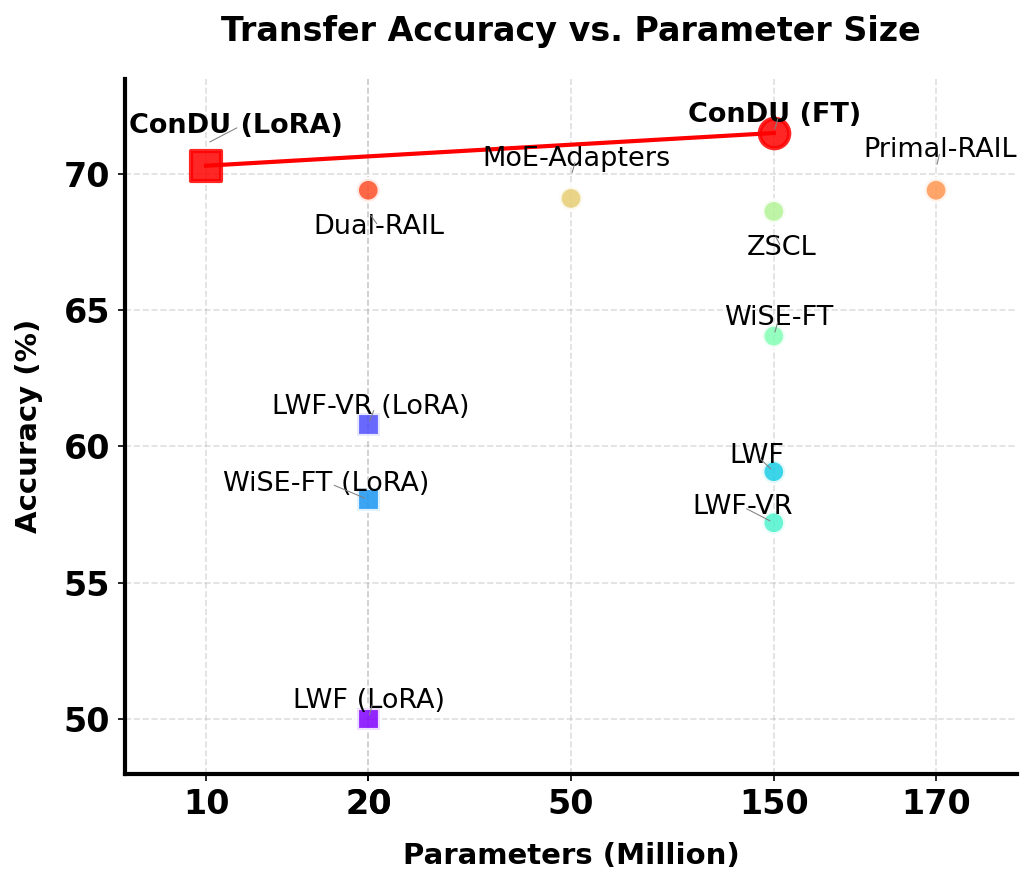}}
    \par\medskip
    \small{Transfer Acc}
\end{minipage}%
\hfill
\begin{minipage}{0.48\textwidth}
    \centering
    \adjustbox{height=5.5cm, keepaspectratio, valign=t}{\includegraphics{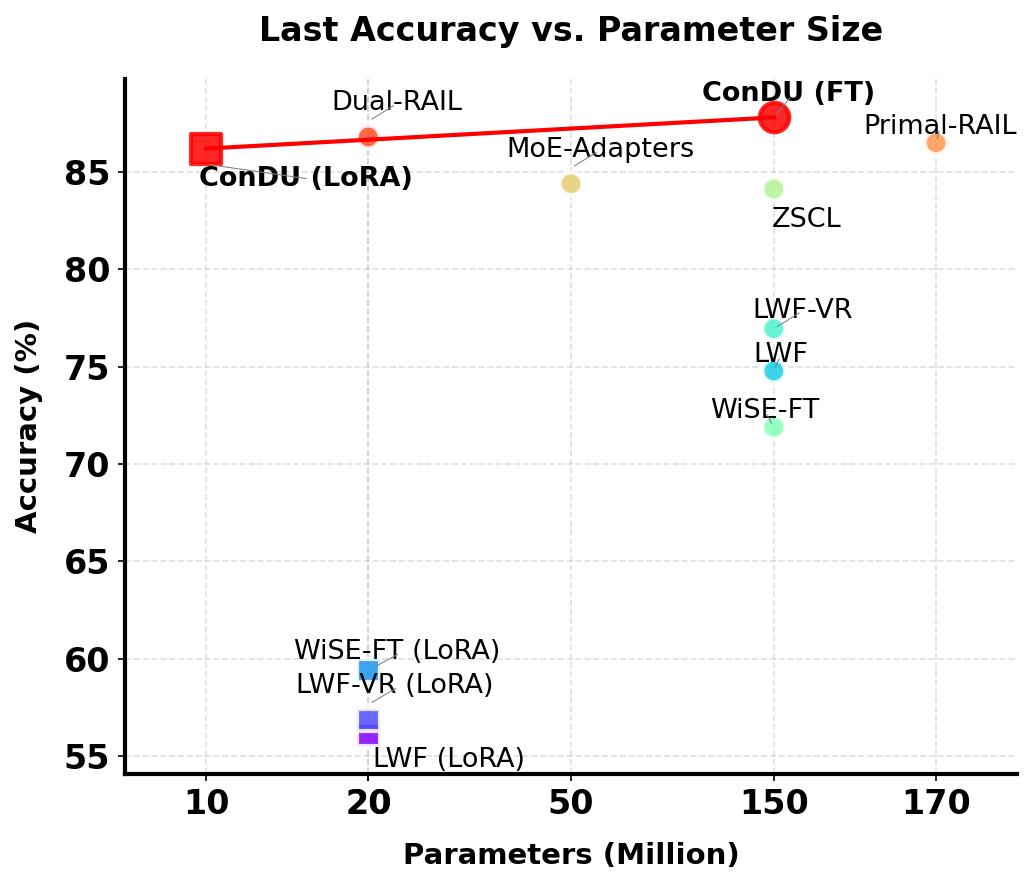}}
    \par\medskip
    \small{Last Acc}
\end{minipage}
\hfill
\end{minipage}

\caption{Comparison of parameter-accuracy trade-off with SOTA continual learning methods. The vertical axis represents the evaluation metrics of different continual learning methods (namely ``Transfer'' and ``Last'' in the two subplots), while the horizontal axis indicates the logarithm of the number of learnable parameters during the training process for each method.}
\label{Learnable_Parameters}
\vspace{-5pt}
\end{figure}

\paragraph{$t$-SNE Visualization of Feature Space.}
To compare the change of feature space of task experts from initial fine-tuning to the end of all sessions, we perform $t$-SNE visualization of the features extracted from the training data of Task 1 (AirCraft). 
Figure \ref{tsne}a illustrates that after session 1, the fine-tuned task expert 1 shows significantly better data discrimination on Task 1.
Figure \ref{tsne}b demonstrates that throughout continual learning process by ConDU (FT), task expert 1 undergoes multiple rounds of unifying and decoupling, but its feature space changes very little, almost undetectable by $t$-SNE. This indicates that the task expert reconstructed by ConDU closely matches the representation ability of the model obtained through initial fine-tuning.

\begin{figure}[t]
\centering
\begin{minipage}{0.98\textwidth}
\centering
{(a)}
    \begin{minipage}{0.20\textwidth}
        \centering
        \includegraphics[width=\linewidth]{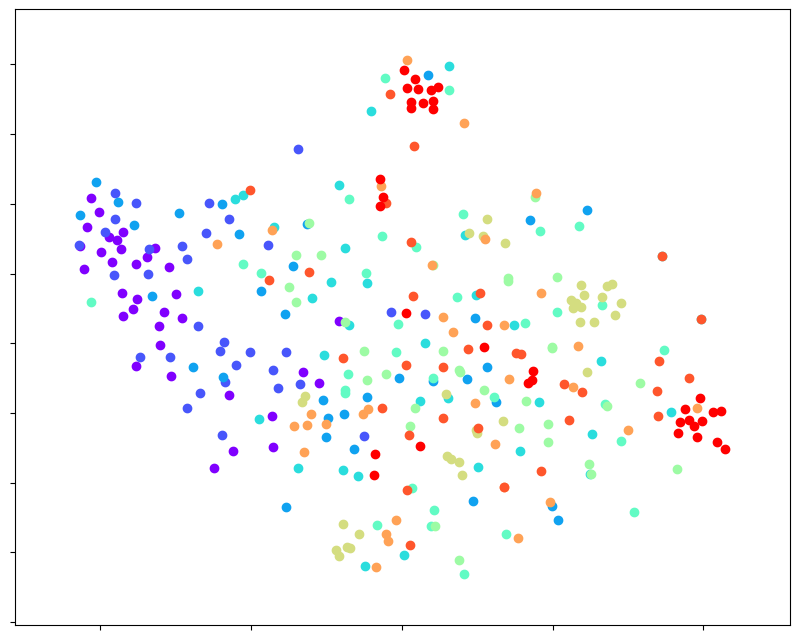}
        {Pre-Trained}
    \end{minipage}
    \begin{minipage}{0.20\textwidth}
        \centering
        \includegraphics[width=\linewidth]{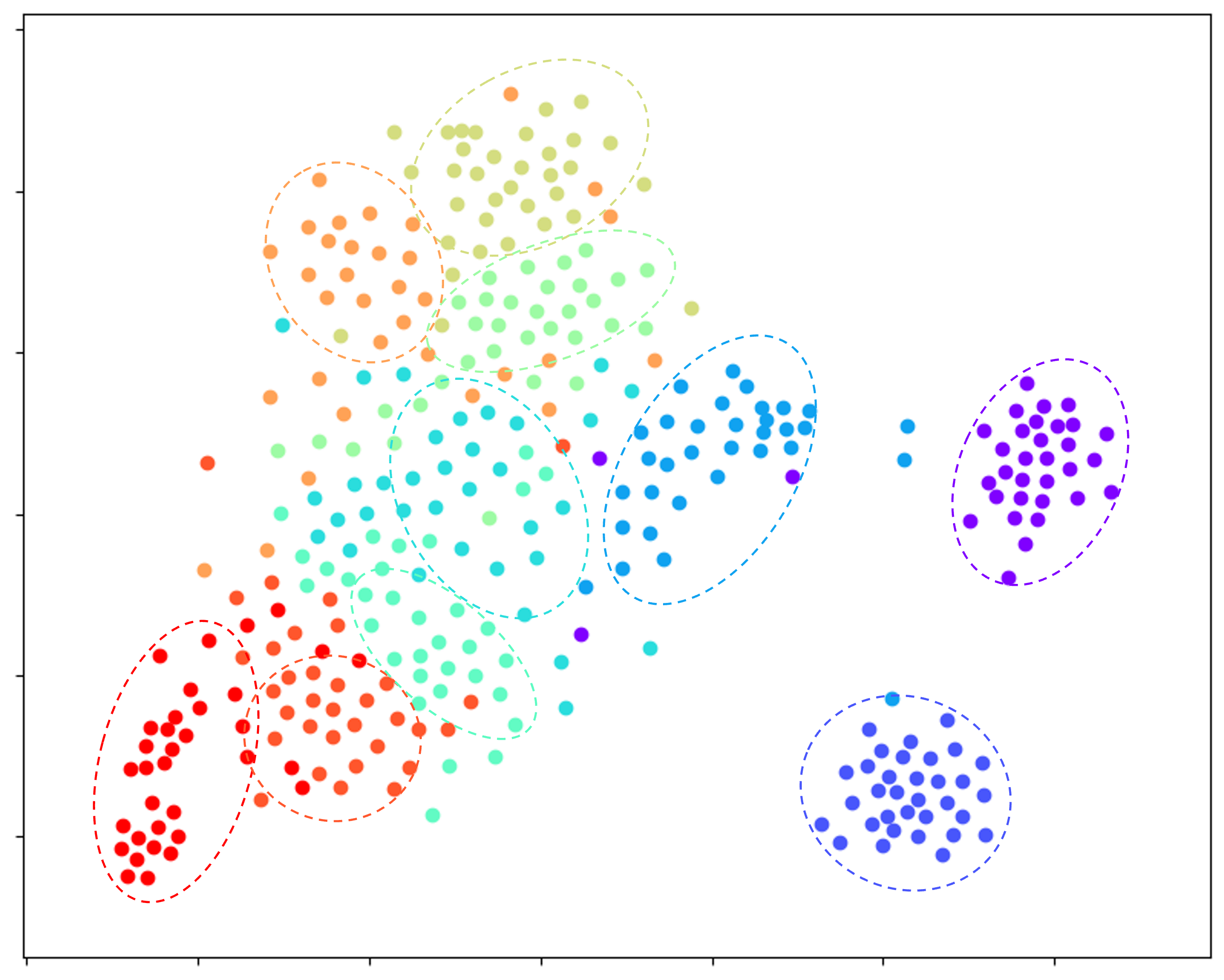}
        {Session 1}
    \end{minipage}
    \begin{minipage}{0.13\textwidth}
        \centering
        \includegraphics[width=\linewidth]{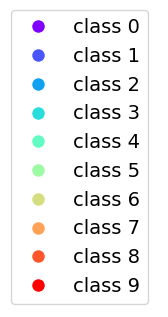}
    \end{minipage}
\end{minipage}
\begin{minipage}{0.98\textwidth}
\centering
{(b)}
    \begin{minipage}{0.20\textwidth}
        \centering
        \includegraphics[width=\linewidth]{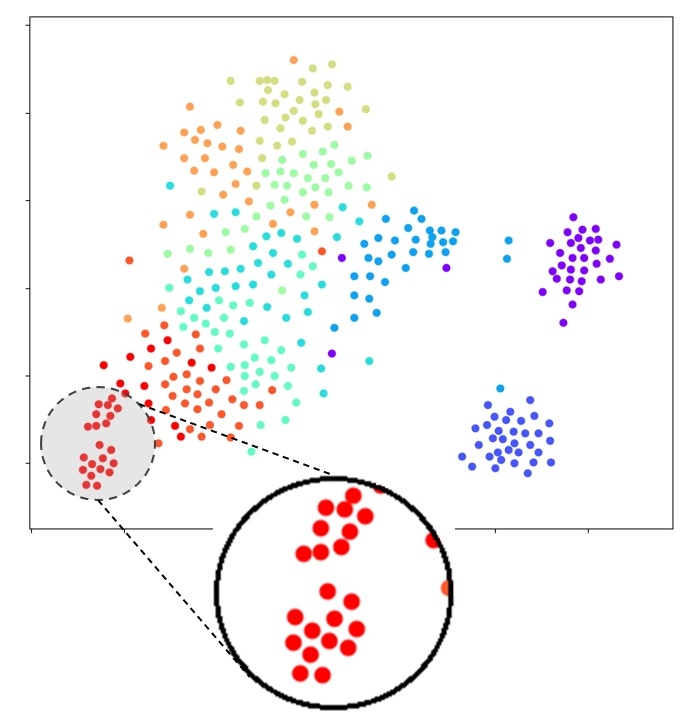}
        {Session 1}
    \end{minipage}
    \begin{minipage}{0.20\textwidth}
        \centering
        \includegraphics[width=\linewidth]{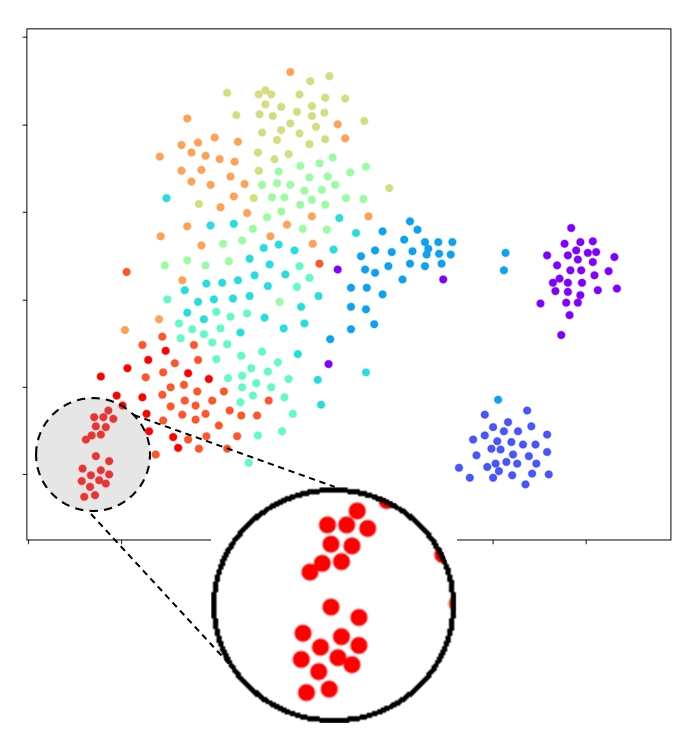}
        {Session 6}
    \end{minipage}
    \begin{minipage}{0.20\textwidth}
        \centering
        \includegraphics[width=\linewidth]{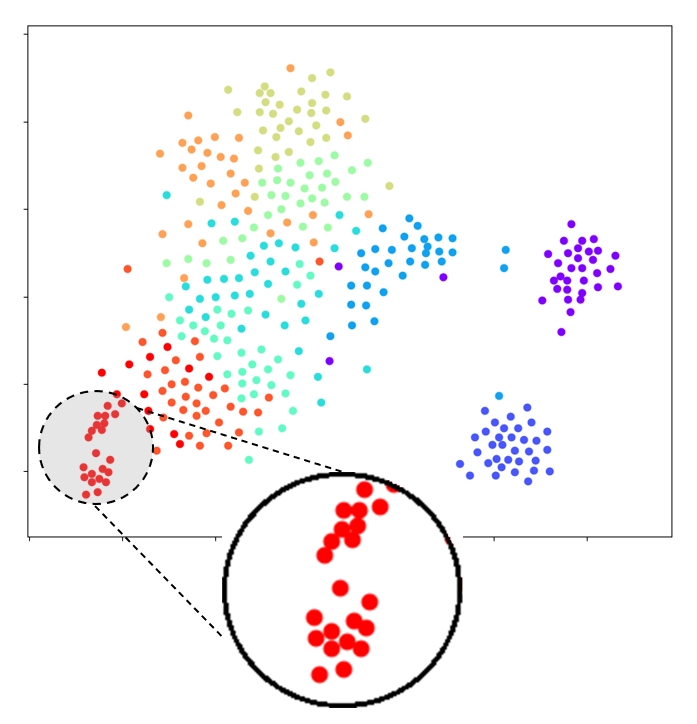}
        {Session 11}
    \end{minipage}
    \begin{minipage}{0.13\textwidth}
        \centering
        \includegraphics[width=\linewidth]{figures/T-SNE/legend.png}
    \end{minipage}
\end{minipage}

\caption{We perform $t$-SNE visualization of the features extracted from the training data of Task 1 (AirCraft). We use four models in total, including the pre-trained VLM and the task expert 1 fine-tuned at the end of sessions 1 and reconstructed by ConDU (FT) at the end of sessions 6 and 11, to extract features of 10 randomly sampled data categories in Task 1.
(a) We perform the t-SNE visualization on pre-trained model and task expert 1 independently to provide fair comparison. (b) We concatenate the features extracted from task expert 1 after session 1, 6, and 11 for t-SNE visualization. The enlarged area is to show the slight changes of features.}
\label{tsne}
\vspace{-5pt}
\end{figure}

\paragraph{Convergence of Delta Models.} We further theoretically analyzed that in the ConDU process, when the number of sessions approaches infinity, the change of each task expert parameter is monotonically non-increasing. The proof of this theorem is in Appendix \ref{append:C}.
\begin{theorem}[Convergence of Delta Models]\label{cor:2}
Suppose the relative order of rescalers remains invariant throughout the continual learning process.
For any session $t\ge 1$,
we have $t-1$ delta models $\delta^1(t),\delta^2(t),\dots,\delta^{t-1}(t)$ decoupled from a bounded unified delta model, along with the latest delta model $\delta^t(t)$.
If all delta models are independently and identically distributed, and
the parameter signs of all delta models are identical for each dimension, then for each $i\in\{1,2,\dots,t-1\}$, the following property holds.
\begin{equation*}
\lim_{t \to +\infty} \mathbb{E}\left[\|\delta^i(t+1) - \delta^i(t)\|_1 - \|\delta^i(t) - \delta^i(t-1)\|_1\right]\le 0.
\end{equation*}
\end{theorem}

\paragraph{More Discussion.} We comprehensively compared ConDU (FT) and ConDU (LoRA) in Appendix \ref{ft_lora} to show distinct advantages of full-finetuning and PEFT under different conditions. We also comprehensively compared ConDU and Individual FT in Appendix \ref{append:condu_individual} to show the effectiveness of decoupling-unifying mechanism. The accuracy of each tasks during each session is shown in Appendix \ref{acc_curve} to demonstrate the reconstruction of our method for task experts.

\clearpage

\subsubsection*{Acknowledgments}
This project is supported by the National Natural Science Foundation of China (No.\ 62406192), Shanghai Municipal Special Program for Basic Research on General AI Foundation Models (Grant No.\ 2025SHZDZX025G03), Opening Project of the State Key Laboratory of General Artificial Intelligence (No.\ SKLAGI2024OP12), the Tencent WeChat Rhino-Bird Focused Research Program, Kuaishou Technology, and the SJTU Kunpeng \& Ascend Center of Excellence.

\subsubsection*{Reproducibility Statement}
Our code is available at \url{https://github.com/zhangzicong518/ConDU}. Detailed explanations of the experimental setup can be found in Section \ref{Experiment Setting} and Appendix \ref{Detailed Experiment Setting}, where we provide the values of all hyperparameters used during training. Consulting these sections may help in reproducing the results and better understanding our released code.

\subsubsection*{Ethics Statement} 
This work adheres to the ICLR Code of Ethics. 
Our research focuses on algorithmic and methodological contributions in continual learning and does not involve human subjects, sensitive personal data, or information that raises direct privacy or security concerns. 
The datasets used in our experiments (e.g., MTIL, TinyImageNet) are widely adopted public benchmarks released under permissive licenses, and we follow standard usage practices without modification that could introduce ethical risks. 
The proposed methods are intended for advancing machine learning research and have no foreseeable harmful applications. 
We are not aware of any conflicts of interest, and the study complies with established principles of fairness, transparency, and research integrity.

\bibliography{utils/ref}
\bibliographystyle{utils/iclr2026_conference}

\appendix
% APPENDIX
\appendix
\onecolumn

\begin{center}
    \Large\bf
    Appendix
\end{center}

\section{Detailed Experiment Setting}\label{Detailed Experiment Setting}
\paragraph{Dataset.} We test our method on three benchmarks, including Multi-domain Task Incremental Learning (MTIL) \citep{zscl}, task-agnostic MTIL, and few-shot MTIL.

The MTIL \citep{zscl} extends task incremental learning to a cross-domain setting, where each task is derived from a distinct domain. The MTIL framework comprises 11 individual tasks, each associated with a separate dataset, collectively representing a total of 1201 classes. In alignment with previous works, we adopt the following datasets: Aircraft \citep{aircraft}, Caltech101 \citep{caltech}, CIFAR100 \citep{cifar}, DTD \citep{dtd}, EuroSAT \citep{eurosat}, Flowers \citep{flower}, Food \citep{food}, MNIST \citep{mnist}, OxfordPet \citep{oxfordpet}, StanfordCars \citep{cars}, and SUN397 \citep{sun}. The task-agnostic MTIL is the variant of the MTIL benchmark, where the task ID is unknown during inference for each test sample. The few-shot MTIL variant involves training with only five train samples per category for each task. 

\paragraph{Protocol.} In our experiments, all evaluation protocols follow the existing works \citep{zscl,moe,ma,coleclip} for fair comparison. We utilize a pre-trained CLIP model with a ViT-B/16 \citep{vit} image encoder. We perform 1000 iterations of training for each task in both MTIL and task-agnostic MTIL. For few-shot MTIL, we train each task for 500 iterations. We use AdamW \citep{adamw} as the optimizer and set the batch size to 32 across all experiments. 

\paragraph{Metric.} For evaluating the MTIL, task-agnostic MTIL, and few-shot MTIL, we follow the existing works \citep{zscl,moe,ma,coleclip} to use three key metrics: ``Average'', ``Last'', and ``Transfer''. The ``Average'' metric computes the average accuracy across all seen tasks. The ``Transfer'' metric evaluates the model's zero-shot transfer performance on subsequent tasks. The ``Last'' metric reflects the model's average performance at the end of the continual learning process. In the task-agnostic MTIL setting, we omit the ``Transfer'' metric and focus solely on the ``Average'' and ``Last'' metrics.

\paragraph{Baseline.} We compare our method with several state-of-the-art (SOTA) approaches, including: 

(1) Zero-shot, (2) Individual FT, (3) Continual FT, (4) LwF  \citep{lwf}, (5) iCaRL \citep{icarl}, (6) LwF-VR \citep{lwfvr}, (7) WiSE-FT \citep{wise-ft} (8) ZSCL \citep{zscl}, (9) MoE \citep{moe}, (10) MA \citep{ma}, (11) Primal-RAIL \citep{rail}, (12) Dual-RAIL \citep{rail}, (13) CoLeCLIP \citep{coleclip}, (14) DPeCLIP \citep{dpeclip}, (15) MulKI \citep{mulki}. 

The results of baselines (2)–(8) and (15) in this paper are FFT-based. The results of baselines (9)–(14) in this paper are PEFT-based.

Zero-shot denotes directly using the pre-trained VLM for prediction on each task without additional fine-tuning. The results of Individual FT represent the performance of using a fully fine-tuned model, trained independently on each task based on the pre-trained VLM, for prediction. Continual FT refers to incrementally fine-tuning the VLM on new tasks without employing any forgetting mitigation strategies. 

\section{More Results of Comparison with State-of-the-art Methods}\label{append:E}

\paragraph{MTIL.} Table \ref{mtil_results_full} presents the full version of Table \ref{mtil}.

\paragraph{Few-shot MTIL.} Table \ref{few_shot_mtil_results_full} presents the full version of Table \ref{few_shot}.

\begin{table*}[htbp]
    \caption{Comparison with SOTA methods on MTIL benchmark in terms of ``Transfer'', ``Average.'', and ``Last'' scores (\%). We label the best methods on average of all datasets with \textbf{bold} styles. The lines with background color represent our methods.}\label{mtil_results_full}
    \centering
    \resizebox{1.0\linewidth}{!}{
    \begin{tabular}{c>{\raggedright\arraybackslash}p{3cm} >{\centering\arraybackslash}p{1cm} >{\centering\arraybackslash}p{1cm}>{\centering\arraybackslash}p{1cm} >{\centering\arraybackslash}p{1cm} >{\centering\arraybackslash}p{1cm}>{\centering\arraybackslash}p{1cm} >{\centering\arraybackslash}p{1cm} >{\centering\arraybackslash}p{1cm}>{\centering\arraybackslash}p{1cm} >{\centering\arraybackslash}p{1cm} >{\centering\arraybackslash}p{1cm} |>{\centering\arraybackslash}p{1.5cm}}
        \toprule
           & {\hspace{1em}} Method & \makecell[c]{\rotatebox{90}{Aircraft~}} & \makecell[c]{\rotatebox{90}{Caltech101~}} & \makecell[c]{\rotatebox{90}{CIFAR100~}} & \makecell[c]{\rotatebox{90}{DTD~}} & \makecell[c]{\rotatebox{90}{EuroSAT~}} & \makecell[c]{\rotatebox{90}{Flowers~}} & \makecell[c]{\rotatebox{90}{Food~}} & \makecell[c]{\rotatebox{90}{MNIST~}} & \makecell[c]{\rotatebox{90}{OxfordPet~}} & \makecell[c]{\rotatebox{90}{Cars~}} & \makecell[c]{\rotatebox{90}{SUN397~}} & \makecell[c]{{\textit{Average}}} \\

        \midrule
        
            \multirow{2}{*}{\rotatebox{90}{}}& {\hspace{1em}}Zero-shot & 24.3 & 88.4 & 68.2 & 44.6 & 54.9 & 71.0 & 88.5 & 59.4 & 89.0 & 64.7 & 65.2 & 65.3  \\
            & {\hspace{1em}}Individual FT & 62.0 & 95.1 & 89.6 & 79.5 & 98.9 & 97.5 & 92.7 & 99.6 & 94.7 & 89.6 & 81.8 & 89.2  \\
            
            \midrule\midrule
            \multirow{15}{*}{\rotatebox{90}{\textbf{Transfer}}} &{\hspace{1em}}Continual FT & - & 67.1 & 46.0 & 32.1 & 35.6 & 35.0 & 57.7 & 44.1 & 60.8 & 20.5 & 46.6 & 44.6 \\
            & {\hspace{1em}}LwF & - & 74.5 & 56.9 & 39.1 & 51.1 & 52.6 & 72.8 & 60.6 & 75.1 & 30.3 & 55.9 & 58.9 \\
            & {\hspace{1em}}iCaRL & - & 56.6 & 44.6 & 32.7 & 39.3 & 46.6 & 68.0 & 46.0 & 77.4 & 31.9 & 60.5 & 50.4 \\
            & {\hspace{1em}}LwF-VR & - & 77.1 & 61.0 & 40.5 & 45.3 & 54.4 & 74.6 & 47.9 & 76.7 & 36.3 & 58.6 & 57.2 \\
            & {\hspace{1em}}WiSE-FT & - & 73.5 & 55.6 & 35.6 & 41.5 & 47.0 & 68.3 & 53.9 & 69.3 & 26.8 & 51.9 & 52.3 \\
            & {\hspace{1em}}ZSCL & - & 86.0 & 67.4 & 45.4 & 50.4 & 69.1 & 87.6 & 61.8 & 86.8 & 60.1 & 66.8 & 68.1 \\
            & {\hspace{1em}}MoE & - & 88.2 & 66.9 & 44.7 & 54.1	& 70.6 & 88.4 & 59.5	& 89.0 & 64.7 & 65.0 & 69.1 \\
            & {\hspace{1em}}MA & - & 87.9 & 68.2 & 44.4 & 49.9 & 70.7 & 88.7 & 59.7 & 89.1 & 64.5 & 65.5 & 68.9 \\
            & {\hspace{1em}}Primal-RAIL & - & 88.4 & 68.2 & 44.6 & 54.9 & 71.0 & 88.5 & 59.6 & 89.0 & 64.7 & 65.2 & 69.4\\
            & {\hspace{1em}}Dual-RAIL & - & 88.4 & 68.2 & 44.6 & 54.9 & 71.0 & 88.5 & 59.6 & 89.0 & 64.7 & 65.2 & 69.4\\
            & {\hspace{1em}}CoLeCLIP & - & 88.2 & 65.1 & 44.7 & 54.1 & 68.8 & 88.5 & 59.5 & 89.0 & 64.7 & 65.1 & 68.8 \\
            & {\hspace{1em}}DPeCLIP & - & 88.2 & 67.2 & 44.7 & 54.0 & 70.6 & 88.2 & 59.5 & 89.0 & 64.7 & 64.8 & 69.1 \\
            & {\hspace{1em}}MulKI  & - & 87.8 & 69.0 & 46.7 & 51.8 & 71.3 & 88.3 & 64.7 & 89.7 & 63.4 & 68.1 & 70.1 \\
            &\cellcolor{cellcolor} {\hspace{1em}}ConDU (LoRA) &\cellcolor{cellcolor} - &\cellcolor{cellcolor} 88.1 &\cellcolor{cellcolor} 68.9 &\cellcolor{cellcolor} 45.7 &\cellcolor{cellcolor} 57.0 &\cellcolor{cellcolor} 71.3 &\cellcolor{cellcolor} 88.8 &\cellcolor{cellcolor} 61.2 &\cellcolor{cellcolor} 89.3 &\cellcolor{cellcolor} 65.1 &\cellcolor{cellcolor} 67.8 &\cellcolor{cellcolor} 70.3\\
            &\cellcolor{cellcolor} {\hspace{1em}}ConDU (FT) &\cellcolor{cellcolor} - &\cellcolor{cellcolor} 88.1 &\cellcolor{cellcolor} 68.9 &\cellcolor{cellcolor} 46.4 &\cellcolor{cellcolor} 57.1 &\cellcolor{cellcolor} 71.4 &\cellcolor{cellcolor} 88.7 &\cellcolor{cellcolor} 65.5 &\cellcolor{cellcolor} 89.3 &\cellcolor{cellcolor} 65.0 &\cellcolor{cellcolor} 67.8 &\cellcolor{cellcolor} \textbf{70.8
            }\\
            % ----------------------------------------------------
            \midrule 
            \multirow{15}{*}{\rotatebox{90}{\textbf{Average}}} &{\hspace{1em}}Continual FT & 25.5 & 81.5 & 59.1 & 53.2 & 64.7 & 51.8 & 63.2 & 64.3 & 69.7 & 31.8 & 49.7 & 55.9 \\
            & {\hspace{1em}}LwF & 36.3 & 86.9 & 72.0 & 59.0 & 73.7 & 60.0 & 73.6 & 74.8 & 80.0 & 37.3 & 58.1 & 64.7 \\
            & {\hspace{1em}}iCaRL & 35.5 & 89.2 & 72.2 & 60.6 & 68.8 & 70.0 & 78.2 & 62.3 & 81.8 & 41.2 & 62.5 & 65.7 \\
            & {\hspace{1em}}LwF-VR & 29.6 & 87.7 & 74.4 & 59.5 & 72.4 & 63.6 & 77.0 & 66.7 & 81.2 & 43.7 & 60.7 & 65.1 \\
            & {\hspace{1em}}WiSE-FT & 26.7 & 86.5 & 64.3 & 57.1 & 65.7 & 58.7 & 71.1 & 70.5 & 75.8 & 36.9 & 54.6 & 60.7 \\
            & {\hspace{1em}}ZSCL & 45.1 & 92.0 & 80.1 & 64.3 & 79.5 & 81.6 & 89.6 & 75.2 & 88.9 & 64.7 & 68.0 & 75.4 \\
            & {\hspace{1em}}MoE & 37.4 & 93.9 & 80.5 & 68.3	& 81.9 & 84.1 & 90.0 & 74.0 & 90.6 & 67.7 & 66.4 & 75.9 \\
            & {\hspace{1em}}MA & 50.2 & 91.9 & 83.1 & 69.4 & 78.9 & 84.0 & 89.1 & 73.7 & 89.3 & 67.7 & 66.9 & 76.7 \\
            & {\hspace{1em}}Primal-RAIL & 51.9 & 95.8 & 80.1 & 70.3 & 81.1 & 86.1 & 89.0 & 73.9 & 90.2 & 68.4 & 66.4 & 77.6\\
            & {\hspace{1em}}Dual-RAIL & 52.5 & 96.0 & 80.6 & 70.4 & 81.3 & 86.3 & 89.1 & 73.9 & 90.2 & 68.5 & 66.5 & 77.8 \\
            & {\hspace{1em}}CoLeCLIP & 48.7	& 94.3 & 76.6 & 69.2 & 79.0 & 83.8 & 89.7 & 73.3 & 90.5 & 68.0 & 66.5 & 76.3 \\
            & {\hspace{1em}}DPeCLIP & 49.9 & 94.9 & 82.4 & 69.4 & 82.2 & 84.3 & 90.0	& 74.0 & 90.4 & 68.3 & 66.3	& 77.5 \\
            & {\hspace{1em}}MulKI & 52.5 & 93.6 & 79.4 & 67.0 & 79.8 & 83.9 & 89.6 & 77.1 & 91.2 & 67.1 & 69.1 & 77.3 \\
            &\cellcolor{cellcolor} {\hspace{1em}}ConDU (LoRA) &\cellcolor{cellcolor} 51.9 &\cellcolor{cellcolor} 94.9 &\cellcolor{cellcolor} 84.4 &\cellcolor{cellcolor} 69.8 &\cellcolor{cellcolor} 81.1 &\cellcolor{cellcolor} 84.4 &\cellcolor{cellcolor} 90.0 &\cellcolor{cellcolor} 77.3 &\cellcolor{cellcolor} 89.5 &\cellcolor{cellcolor} 69.0 &\cellcolor{cellcolor} 69.3 &\cellcolor{cellcolor} 78.3 \\
            &\cellcolor{cellcolor} {\hspace{1em}}ConDU (FT) &\cellcolor{cellcolor} 59.6 &\cellcolor{cellcolor} 93.4 &\cellcolor{cellcolor} 83.7 &\cellcolor{cellcolor} 68.1 &\cellcolor{cellcolor} 83.4 &\cellcolor{cellcolor} 83.7 &\cellcolor{cellcolor} 90.1 &\cellcolor{cellcolor} 76.7 &\cellcolor{cellcolor} 90.6 &\cellcolor{cellcolor} 68.6 &\cellcolor{cellcolor} 68.6 &\cellcolor{cellcolor} \textbf{78.8} \\
            \midrule
            % ----------------------------------------------------
            \multirow{15}{*}{\rotatebox{90}{\textbf{Last}}} &{\hspace{1em}}Continual FT & 31.0 & 89.3 & 65.8 & 67.3 & 88.9 & 71.1 & 85.6 & 99.6 & 92.9 & 77.3 & 81.1 & 77.3 \\
            & {\hspace{1em}}LwF & 26.3 & 87.5 & 71.9 & 66.6 & 79.9 & 66.9 & 83.8 & 99.6 & 92.1 & 66.1 & 80.4 & 74.6 \\
            & {\hspace{1em}}iCaRL & 35.8 & 93.0 & 77.0 & 70.2 & 83.3 & 88.5 & 90.4 & 86.7 & 93.2 & 81.2 & 81.9 & 80.1 \\
            & {\hspace{1em}}LwF-VR & 20.5 & 89.8 & 72.3 & 67.6 & 85.5 & 73.8 & 85.7 & 99.6 & 93.1 & 73.3 & 80.9 & 76.6 \\
            & {\hspace{1em}}WiSE-FT & 27.2 & 90.8 & 68.0 & 68.9 & 86.9 & 74.0 & 87.6 & 99.6 & 92.6 & 77.8 & 81.3 & 77.7 \\
            & {\hspace{1em}}ZSCL & 40.6 & 92.2 & 81.3 & 70.5 & 94.8 & 90.5 & 91.9 & 98.7 & 93.9 & 85.3 & 80.2 & 83.6 \\
            & {\hspace{1em}}MoE & 34.6 & 94.7 & 82.7 & 76.9	& 97.7 & 94.8 & 91.9 & 99.4 & 94.7 & 80.9	& 80.5 & 84.4 \\
            & {\hspace{1em}}MA & 49.8 & 92.2 & 86.1 & 78.1 & 95.7 & 94.3 & 89.5 & 98.1 & 89.9 & 81.6 & 80.0 & 85.0 \\
            & {\hspace{1em}}Primal-RAIL & 51.9 & 96.5 & 82.8 & 80.0 & 96.0 & 98.7 & 89.7 & 98.8 & 93.3 & 84.8 & 78.7 & 86.5 \\
            & {\hspace{1em}}Dual-RAIL & 52.5 & 96.8 & 83.3 & 80.1 & 96.4 & 99.0 & 89.9 & 98.8 & 93.5 & 85.5 & 79.2 & 86.8 \\
            & {\hspace{1em}}CoLeCLIP & 48.7	& 94.9 & 78.8 & 78.4 & 88.9 & 96.3 & 91.1 & 97.6 & 94.4 & 82.7 & 80.2 & 84.7 \\
            & {\hspace{1em}}DPeCLIP & 49.9 & 95.6 & 85.8 & 78.6 & 98.4 & 95.8 & 92.1 & 99.4 & 94.0 & 84.5 & 81.7 & 86.9 \\
            & {\hspace{1em}}MulKI & 49.7 & 93.0 & 82.8 & 73.7 & 96.2 & 92.3 & 90.4 & 99.0& 94.8 & 85.2 & 78.9 & 85.1 \\
            &\cellcolor{cellcolor} {\hspace{1em}}ConDU (LoRA) &\cellcolor{cellcolor} 48.9 &\cellcolor{cellcolor} 95.2 &\cellcolor{cellcolor} 87.8 &\cellcolor{cellcolor} 78.5 &\cellcolor{cellcolor} 96.3 &\cellcolor{cellcolor} 95.2 &\cellcolor{cellcolor} 91.7 &\cellcolor{cellcolor} 97.6 &\cellcolor{cellcolor} 93.0 &\cellcolor{cellcolor} 85.3 &\cellcolor{cellcolor} 78.8 &\cellcolor{cellcolor} 86.2 \\ 
            &\cellcolor{cellcolor} {\hspace{1em}}ConDU (FT) &\cellcolor{cellcolor} 58.6 &\cellcolor{cellcolor} 93.7 &\cellcolor{cellcolor} 86.6 &\cellcolor{cellcolor} 76.1 &\cellcolor{cellcolor} 98.2 &\cellcolor{cellcolor} 93.4 &\cellcolor{cellcolor} 91.9 &\cellcolor{cellcolor} 99.6 &\cellcolor{cellcolor} 94.8 &\cellcolor{cellcolor} 84.9 &\cellcolor{cellcolor} 80.5 &\cellcolor{cellcolor} \textbf{87.1} \\      
        \bottomrule
    \end{tabular}}
     %\vspace{-11pt}
\end{table*}

\begin{table*}[htbp]
    \caption{Comparison with SOTA methods on few-shot MTIL benchmark in terms of ``Transfer'', ``Average.'', and ``Last'' scores (\%). We label the best methods on average of all datasets with \textbf{bold} styles. The lines with background color represent our methods.}\label{few_shot_mtil_results_full}
    \centering
    \resizebox{1.0\linewidth}{!}{
    \begin{tabular}{c>{\raggedright\arraybackslash}p{3cm} >{\centering\arraybackslash}p{1cm} >{\centering\arraybackslash}p{1cm}>{\centering\arraybackslash}p{1cm} >{\centering\arraybackslash}p{1cm} >{\centering\arraybackslash}p{1cm}>{\centering\arraybackslash}p{1cm} >{\centering\arraybackslash}p{1cm} >{\centering\arraybackslash}p{1cm}>{\centering\arraybackslash}p{1cm} >{\centering\arraybackslash}p{1cm} >{\centering\arraybackslash}p{1cm} |>{\centering\arraybackslash}p{1.5cm}}
        \toprule
           & {\hspace{1em}} Method & \makecell[c]{\rotatebox{90}{Aircraft~}} & \makecell[c]{\rotatebox{90}{Caltech101~}} & \makecell[c]{\rotatebox{90}{CIFAR100~}} & \makecell[c]{\rotatebox{90}{DTD~}} & \makecell[c]{\rotatebox{90}{EuroSAT~}} & \makecell[c]{\rotatebox{90}{Flowers~}} & \makecell[c]{\rotatebox{90}{Food~}} & \makecell[c]{\rotatebox{90}{MNIST~}} & \makecell[c]{\rotatebox{90}{OxfordPet~}} & \makecell[c]{\rotatebox{90}{Cars~}} & \makecell[c]{\rotatebox{90}{SUN397~}} & \makecell[c]{{\textit{Average}}} \\

        \midrule
        
            \multirow{2}{*}{\rotatebox{90}{}}
            & {\hspace{1em}}Zero-shot & 24.3 & 88.4 & 68.2 & 44.6 & 54.9 & 71.0 & 88.5 & 59.6 & 89.0 & 64.7 & 65.2 & 65.3 \\
            & {\hspace{1em}}Individual FT & 30.6 & 93.5 & 76.8 & 65.1 & 91.7 & 92.9 & 83.3 & 96.6 & 84.9 & 65.4 & 71.3 & 77.5 \\
            
            \midrule\midrule
            \multirow{10}{*}{\rotatebox{90}{\textbf{Transfer}}}
            & {\hspace{1em}}Continual FT & - & 72.8 & 53.0 & 36.4 & 35.4 & 43.3 & 68.4 & 47.4 & 72.6 & 30.0 & 52.7 & 51.2 \\
            & {\hspace{1em}}LwF & - & 72.1 & 49.2 & 35.9 & 44.5 & 41.1 & 66.6 & 50.5 & 69.0 & 19.0 & 51.7 & 50.0 \\
            & {\hspace{1em}}LwF-VR & - & 82.2 & 62.5 & 40.1 & 40.1 & 56.3 & 80.0 & 60.9 & 77.6 & 40.5 & 60.8 & 60.1 \\
            & {\hspace{1em}}WiSE-FT & - & 77.6 & 60.0 & 41.3 & 39.4 & 53.0 & 76.6 & 58.1 & 75.5 & 37.3 & 58.2 & 57.7 \\
            & {\hspace{1em}}ZSCL & - & 84.0 & 68.1 & 44.8 & 46.8 & 63.6 & 84.9 & 61.4 & 81.4 & 55.5 & 62.2 & 65.3 \\
            & {\hspace{1em}}MoE & - & 87.9 & 68.2 & 44.1 & 48.1 & 64.7 & 88.8 & 69.0 & 89.1 & 64.5 & 65.1 & 68.9 \\
            & {\hspace{1em}}Primal-RAIL & - & 88.4 & 68.2 & 44.6 & 54.9 & 71.0 & 88.5 & 59.6 & 89.0 & 64.7 & 65.2 & 69.4 \\
            & {\hspace{1em}}Dual-RAIL & - & 88.4 & 68.2 & 44.6 & 54.9 & 71.0 & 88.5 & 59.6 & 89.0 & 64.7 & 65.2 & 69.4 \\
            &\cellcolor{cellcolor} {\hspace{1em}}ConDU (FT) &\cellcolor{cellcolor} - &\cellcolor{cellcolor} 88.0 &\cellcolor{cellcolor} 69.5 &\cellcolor{cellcolor} 45.6 &\cellcolor{cellcolor} 54.4 &\cellcolor{cellcolor} 71.1 &\cellcolor{cellcolor} 88.7 &\cellcolor{cellcolor} 62.2 &\cellcolor{cellcolor} 88.9 &\cellcolor{cellcolor} 64.4 &\cellcolor{cellcolor} 66.6 &\cellcolor{cellcolor} 70.0 \\
            &\cellcolor{cellcolor} {\hspace{1em}}ConDU (LoRA) &\cellcolor{cellcolor} - &\cellcolor{cellcolor} 88.1 &\cellcolor{cellcolor} 68.5 &\cellcolor{cellcolor} 45.6 &\cellcolor{cellcolor} 56.4 &\cellcolor{cellcolor} 71.2 &\cellcolor{cellcolor} 89.0 &\cellcolor{cellcolor} 64.0 &\cellcolor{cellcolor} 88.8 &\cellcolor{cellcolor} 64.9 &\cellcolor{cellcolor} 66.4 &\cellcolor{cellcolor} \textbf{70.3} \\
            % --------------------------
            \midrule 
            \multirow{10}{*}{\rotatebox{90}{\textbf{Average}}}
            & {\hspace{1em}}Continual FT & 28.1 & 86.4 & 59.1 & 52.8 & 55.8 & 62.0 & 70.2 & 64.7 & 75.5 & 35.0 & 54.0 & 58.5 \\
            & {\hspace{1em}}LwF& 23.5 & 77.4 & 43.5 & 41.7 & 43.5 & 52.2 & 54.6 & 63.4 & 68.0 & 21.3 & 52.6 & 49.2 \\
            & {\hspace{1em}}LwF-VR& 24.9 & 89.1 & 64.2 & 53.4 & 54.3 & 70.8 & 79.2 & 66.5 & 79.2 & 44.1 & 61.6 & 62.5 \\
            & {\hspace{1em}}WiSE-FT& 32.0 & 87.7 & 61.0 & 55.8 & 68.1 & 69.3 & 76.8 & 71.5 & 77.6 & 42.0 & 59.3 & 63.7 \\
            & {\hspace{1em}}ZSCL& 28.2 & 88.6 & 66.5 & 53.5 & 56.3 & 73.4 & 83.1 & 56.4 & 82.4 & 57.5 & 62.9 & 64.4 \\
            & {\hspace{1em}}MoE & 30.0 & 89.6 & 73.9 & 58.7 & 69.3 & 79.3 & 88.1 & 76.5 & 89.1 & 65.3 & 65.8 & 71.4 \\
            & {\hspace{1em}}Primal-RAIL & 32.9 & 94.5 & 69.9 & 58.1 & 71.8 & 84.4 & 88.5 & 70.4 & 89.0 & 66.1 & 65.7 & 71.9 \\
            & {\hspace{1em}}Dual-RAIL & 36.0 & 94.2 & 70.9 & 58.8 & 70.6  & 84.3 & 88.5 & 70.3 & 89.7 & 66.5 & 65.8 & 72.3 \\
            &\cellcolor{cellcolor} {\hspace{1em}}ConDU (FT) &\cellcolor{cellcolor} 33.1 &\cellcolor{cellcolor} 90.5 &\cellcolor{cellcolor} 74.1 &\cellcolor{cellcolor} 58.3 &\cellcolor{cellcolor} 76.2 &\cellcolor{cellcolor} 81.0 &\cellcolor{cellcolor} 87.9 &\cellcolor{cellcolor} 73.4 &\cellcolor{cellcolor} 88.0 &\cellcolor{cellcolor} 64.8 &\cellcolor{cellcolor} 67.1 &\cellcolor{cellcolor} 72.3 \\
            &\cellcolor{cellcolor} {\hspace{1em}}ConDU (LoRA) &\cellcolor{cellcolor} 32.4 &\cellcolor{cellcolor} 92.1 &\cellcolor{cellcolor} 75.4 &\cellcolor{cellcolor} 58.8 &\cellcolor{cellcolor} 75.1 &\cellcolor{cellcolor} 82.9 &\cellcolor{cellcolor} 87.3 &\cellcolor{cellcolor} 74.0 &\cellcolor{cellcolor} 89.3 &\cellcolor{cellcolor} 65.1 &\cellcolor{cellcolor} 67.0 &\cellcolor{cellcolor} \textbf{72.7} \\
            \midrule
            % --------------------------
            \multirow{10}{*}{\rotatebox{90}{\textbf{Last}}}
            & {\hspace{1em}}Continual FT& 27.8 & 86.9 & 60.1 & 58.4 & 56.6 & 75.7 & 73.8 & 93.1 & 82.5 & 57.0 & 66.8 & 67.1 \\
            & {\hspace{1em}}LwF& 22.1 & 58.2 & 17.9 & 32.1 & 28.1 & 66.7 & 46.0 & 84.3 & 64.1 & 31.5 & 60.1 & 46.5 \\
            & {\hspace{1em}}LwF-VR& 22.9 & 89.9 & 59.3 & 57.1 & 57.6 & 79.2 & 78.3 & 77.7 & 83.6 & 60.1 & 69.8 & 66.9 \\
            & {\hspace{1em}}WiSE-FT& 30.8 & 88.9 & 59.6 & 60.3 & 80.9 & 81.7 & 77.1 & 94.9 & 83.2 & 62.8 & 70.0 & 71.9 \\
            & {\hspace{1em}}ZSCL& 26.8 & 88.5 & 63.7 & 55.7 & 60.2 & 82.1 & 82.6 & 58.6 & 85.9 & 66.7 & 70.4 & 67.4 \\
            & {\hspace{1em}}MoE & 30.1 & 89.3 & 74.9 & 64.0 & 82.3 & 89.4 & 87.1 & 89.0 & 89.1 & 69.5 & 72.5 & 76.1 \\
            & {\hspace{1em}}Primal-RAIL & 32.9 & 95.1 & 70.3 & 63.2 & 81.5 & 95.6	& 88.5 & 89.7 & 89.0 & 72.5	& 71.0 & 77.2 \\
            & {\hspace{1em}}Dual-RAIL & 36.0 & 94.8	& 71.5 & 64.1 & 79.5 & 95.3 & 88.5 & 89.4 & 91.5	& 74.6 & 71.3 & \textbf{77.9} \\
            &\cellcolor{cellcolor} {\hspace{1em}}ConDU (FT) &\cellcolor{cellcolor} 33.3 &\cellcolor{cellcolor} 90.7 &\cellcolor{cellcolor} 75.0 &\cellcolor{cellcolor} 63.1 &\cellcolor{cellcolor} 88.8 &\cellcolor{cellcolor} 88.6 &\cellcolor{cellcolor} 87.0 &\cellcolor{cellcolor} 91.8 &\cellcolor{cellcolor} 85.6 &\cellcolor{cellcolor} 66.5 &\cellcolor{cellcolor} 71.9 &\cellcolor{cellcolor} 76.6\\
            &\cellcolor{cellcolor} {\hspace{1em}}ConDU (LoRA)  &\cellcolor{cellcolor} 31.8 &\cellcolor{cellcolor} 92.4 &\cellcolor{cellcolor} 76.7 &\cellcolor{cellcolor} 63.4 &\cellcolor{cellcolor} 86.8 &\cellcolor{cellcolor} 91.8 &\cellcolor{cellcolor} 85.6 &\cellcolor{cellcolor} 93.9 &\cellcolor{cellcolor} 90.3 &\cellcolor{cellcolor} 68.1 &\cellcolor{cellcolor} 70.9 &\cellcolor{cellcolor} 77.4 \\
        \bottomrule
    \end{tabular}}
     %\vspace{-11pt}
\end{table*}

\section{Other Order of MTIL}\label{append:G}
To further validate the effectiveness of our approach, we refer to existing works \citep{zscl,mulki} and present the MTIL experimental results based on another task order in Table \ref{mtil_order2}, with all other experimental settings unchanged. Order II is StanfordCars, Food, MNIST, OxfordPet, Flowers, SUN397, Aircraft, Caltech101, DTD, EuroSAT, CIFAR100. Only a few baselines have reported results for this task order, and we present the results of these baselines in the table. Even with a different task order, we still achieve performance beyond the state-of-the-art. 

\begin{table}[t]
\centering
\caption{Performance (\%) Comparison of of state-of-the-art CL methods on MTIL benchmark in Order II
% \kai{adding one columns to fill the width}
}\label{mtil_order2}
%\resizebox{\columnwidth}{!}{%
% {\small
\begin{tabular}{@{}lcc|cc|cc@{}}
\toprule
Method & Transfer & $\Delta$ & Average & $\Delta$ & Last & $\Delta$  \\ \midrule
 CLIP Zero-shot \citep{clip}  &65.4& 	0.0 	&65.3 	&0.0 	  &65.3 &	0.0  \\
 Continual FT &46.6& 	-18.8& 	56.2& 	-9.1 	&67.4 &	2.1   \\
 \midrule
 LwF \citep{lwf}~ &53.2& 	-12.2& 	62.2& 	-3.1 	&71.9 &	6.6   \\
 iCaRL \citep{icarl}~ &50.9& 	-14.5& 	56.9& 	-8.4 	&71.6 &	6.3  \\
 LwF-VR \citep{lwfvr}~ &53.1& 	-12.3& 	60.6& 	-4.7 	&68.3 &	3.0  \\
 WiSE-FT \citep{wise-ft}~&51.0& 	-14.4& 	61.5& 	-3.8 	&72.2 &	6.9   \\
 ZSCL \citep{zscl} ~ &64.2& 	-1.2 &	74.5 &	9.2 	&83.4 &	18.1  \\
 MoE \citep{moe}~ &64.3& 	-1.1 &	74.7 &	9.4 	&84.1 &	18.8   \\ 
 MulKI \citep{mulki}~&65.6&0.2&75.0&9.7&84.2&18.9\\
 ConDU (FT)~&66.5&1.1&75.9&10.6&85.6&20.3\\
\bottomrule
\end{tabular}%
\end{table}

\section{Experimental Results on Class-Incremental Learning}\label{class_incremental}
To explore the generality of the ConDU framework beyond vision-language models, we conducted additional experiments on a single-modality dataset. Specifically, we evaluated ConDU on TinyImageNet under class-incremental settings with 10, and 20 tasks. The results show that ConDU consistently outperforms competitive baselines, indicating that it is also effective for class-incremental learning in single-modality scenarios.

In the first session, we train the model on 100 categories. Then we split the remaining 100 categories of TinyImageNet dataset into 10 tasks (each with 10 categories) and 20 tasks (each with 5 categories). Following the standard class-incremental learning setup, we train on all tasks sequentially and report the average inference accuracy across all tasks after all sessions (task ID agnostic). The results are in Table \ref{tinyimagenet}.

\begin{table}[t]
\centering
\caption{Comparison of different methods under 10-step and 20-step settings of TinyImageNet class-Incremental learning.}
\label{tinyimagenet}
\begin{tabular}{lcc}
\toprule
\textbf{Method} & \textbf{10 Steps} & \textbf{20 Steps} \\
\midrule
ZSCL         & 71.62 & 68.30 \\
LwF          & 44.00 & 42.26 \\
LwF-VR       & 67.05 & 63.89 \\
iCaRL        & 65.97 & 64.48 \\
Continual FT & 41.54 & 44.55 \\
CLIP         & 65.59 & 65.30 \\
ConDU (FT)   & \textbf{71.74} & \textbf{71.49} \\
ConDU (LoRA) & 68.80 & 68.28 \\
\bottomrule
\end{tabular}
\end{table}

\section{Comparison between ConDU (FT) and ConDU (LoRA)}\label{ft_lora}
\paragraph{Comparison between the ConDU (FT) and ConDU (LoRA).}
When hardware conditions are limited, we recommend using ConDU (LoRA). When hardware conditions can support full fine-tuning, ConDU (FT) and ConDU (LoRA) has have distinct advantages under different conditions.

\begin{itemize}
    \item Table \ref{mtil} (Standard MTIL): For the ``Transfer'' metric, ConDU (FT) outperforms ConDU (LoRA) on 3 tasks, underperforms on 2, and ties on the rest. Average accuracy favors ConDU (FT). For the ``Average'' metric, ConDU (FT) outperforms ConDU (LoRA) on 4 tasks, and underperforms on 7. However, again, the average accuracy is higher for ConDU (FT). For the ``Last'' metric, ConDU (FT) outperforms on 6 tasks and underperforms on 5, with higher average accuracy. Overall, ConDU (FT) achieves higher mean accuracy across all three metrics in standard MTIL, and shows better per-task performance in two metrics out of the three.
    \item Table \ref{task_agnostic} (Task-Agnostic MTIL): For the ``Average'' metric, ConDU (FT) outperforms ConDU (LoRA) on 4 tasks and underperforms on 7, but still achieves a higher overall average. For the ``Last'' metric, FT outperforms on 7 tasks and underperforms on 4, again with a higher overall average. Overall, ConDU (FT) achieves higher mean accuracy across all two metrics in task-agnostic MTIL, and shows better per-task performance in one metric out of the two.
    \item Table \ref{few_shot} (Few-shot MTIL): ConDU (LoRA) is better when there is less fine-tuning data for a single task as shown in Table \ref{few_shot}), because too little data will lead to overfitting during full fine-tuning.
\end{itemize}

Then, we also conducted experiments on the TinyImageNet class-incremental benchmark, using both 10-task and 20-task splits. The results in Table \ref{tinyimagenet} show that ConDU (FT) significantly outperforms ConDU (LoRA) under this setting.

In summary, full fine-tuning and PEFT each have distinct advantages under different conditions in continual learning.

\section{The choice of $K$ of Aggregating}\label{append:D}
\begin{table*}[t]
    \caption{Comparison of ``Transfer'' metric on MTIL with different choice of $K$ by ConDU (FT).}
    \centering
    \resizebox{1.0\linewidth}{!}{
    \begin{tabular}{c>{\raggedright\arraybackslash}p{1cm} >{\centering\arraybackslash}p{1cm}>{\centering\arraybackslash}p{1cm} >{\centering\arraybackslash}p{1cm} >{\centering\arraybackslash}p{1cm}>{\centering\arraybackslash}p{1cm} >{\centering\arraybackslash}p{1cm} >{\centering\arraybackslash}p{1cm}>{\centering\arraybackslash}p{1cm} >{\centering\arraybackslash}p{1cm} >{\centering\arraybackslash}p{1cm} |>{\centering\arraybackslash}p{1.5cm}}
        \toprule
           {\hspace{1em}}  & \makecell[c]{\rotatebox{90}{Aircraft~}} & \makecell[c]{\rotatebox{90}{Caltech101~}} & \makecell[c]{\rotatebox{90}{CIFAR100~}} & \makecell[c]{\rotatebox{90}{DTD~}} & \makecell[c]{\rotatebox{90}{EuroSAT~}} & \makecell[c]{\rotatebox{90}{Flowers~}} & \makecell[c]{\rotatebox{90}{Food~}} & \makecell[c]{\rotatebox{90}{MNIST~}} & \makecell[c]{\rotatebox{90}{OxfordPet~}} & \makecell[c]{\rotatebox{90}{Cars~}} & \makecell[c]{\rotatebox{90}{SUN397~}} & \makecell[c]{{\textit{Average}}} \\
            \midrule\midrule
            $K=1$ & & 88.4 & 68.2 & 44.7 & 55.3 & 71.0 & 88.5 & 59.5 & 89.0 & 64.7 & 65.4 & 69.5 \\
$K=2$ & & 88.1 & 68.6 & 45.7 & 57.8 & 71.1 & 88.7 & 63.5 & 89.1 & 64.8 & 66.5 & 70.4 \\
$K=3$ & & 88.1 & 68.9 & 46.2 & 57.4 & 71.4 & 88.7 & 64.0 & 89.4 & 64.9 & 67.2 & 70.6 \\
$K=4$ & & 88.1 & 68.9 & 46.4 & 56.4 & 71.6 & 88.7 & 64.4 & 89.3 & 64.8 & 67.7 & 70.6 \\
$K=5$ & & 88.1 & 68.9 & 46.4 & 57.0 & 71.3 & 88.7 & 65.5 & 89.3 & 65.0 & 67.8 & 70.8 \\
$K=6$ & & 88.1 & 68.9 & 46.4 & 57.0 & 71.5 & 88.5 & 64.0 & 88.6 & 65.0 & 67.9 & 70.6 \\
$K=7$ & & 88.1 & 68.9 & 46.4 & 57.0 & 71.5 & 88.1 & 64.5 & 88.2 & 64.5 & 67.9 & 70.5 \\
$K=8$ & & 88.1 & 68.9 & 46.4 & 57.0 & 71.5 & 88.1 & 62.7 & 87.8 & 64.5 & 67.7 & 70.3 \\
$K=9$ & & 88.1 & 68.9 & 46.4 & 57.0 & 71.5 & 88.1 & 62.7 & 87.6 & 64.4 & 67.7 & 70.2 \\
$K=10$& & 88.1 & 68.9 & 46.4 & 57.0 & 71.5 & 88.1 & 62.7 & 87.6 & 64.3 & 67.7 & 70.2 \\
$K=11$& & 88.1 & 68.9 & 46.4 & 57.0 & 71.5 & 88.1 & 62.7 & 87.6 & 64.3 & 67.7 & 70.2 \\
Unified& &66.7 & 44.6 & 29.3 & 43.6 & 47.7 & 60.1 & 48.7 & 53.5 & 28.0 & 49.1 & 47.1\\ 
        \bottomrule
    \end{tabular}}
    \label{ablation of K}
\end{table*}

Table \ref{ablation of K} presents the inference performance of the ``Transfer'' metric of ConDU (FT) on the MTIL benchmark with different values of $K$. The last row ``Unified'' denotes directly using unified model to predict unseen tasks. 
%T1-T11 represent the 11 tasks, AirCraft-SUN397, ordered alphabetically, while Avg denotes the average result. 
From the table, we can observe that when $K$ is set to 1, the Avg performance is the lowest at 69.5\%. As $K$ increases, the Avg performance gradually improves, reaching the best result of 70.8\% when $K$ is 5. However, after this point, the Avg performance begins to decrease as $K$ increases further. Overall, the performance of the aggregating mechanism is relatively insensitive to the choice of $K$. The results in the last row indicate that directly using the unified model for predicting unseen tasks yields significantly worse performance compared to our proposed aggregating prediction approach. Within our framework, the unified model can be viewed as a compressed storage mechanism designed to avoid storing all task-specific models individually. Its primary purpose is to interact with task triggers to reconstruct all task-specific models, rather than serving as a standalone model for inference. This is because the objective of model fusion was never intended to make the unified model itself directly applicable for inference.

\section{Proof of Theorem~\ref{cor:2}}\label{append:C}

\subsection{Definitions}
One iteration of unifying and decoupling is the same as in Section \ref{method}, but some symbols need to be slightly changed. Therefore, we redefine the unifying and decoupling process here.
At session $t$, if we have $n$ delta models $\delta^1(t),\delta^2(t),\dots,\delta^{n}(t)$, the $j$-th position of unified delta model $\delta(t)$ (which is denoted as $\delta^{1:n}(t)$ in Section \ref{method} and is simplified here) is calculated as
\begin{align*}
&\delta_{j}(t)=    
\begin{cases}
     \max_i(\delta_j^i(t))&\text{if} \ \sum_{i=1}^{n}\delta_j^i(t)>0\\
          \min_i(\delta_j^i(t))&\text{if} \ \sum_{i=1}^{n}\delta_j^i(t)<0.
\end{cases}
\end{align*}
The $j$-th position of mask $M^i(t)$ is calculated as
$
M_j^i(t)=
\begin{cases}
    1&\text{if} \ \delta_j^i(t) \cdot \delta_j(t)>0\\
    0&\text{if} \ \delta_j^i(t) \cdot \delta_j(t)<0.
\end{cases}
$

The rescaler $\lambda^i(t)$ is calculated as 
$\lambda^i(t) = \frac{\text{sum}(\text{abs}(\delta^i(t)))}{\text{sum}(\text{abs}(M^i(t)\odot\delta(t)))}$. The rescaler $\lambda^i(t)$ ensures the \(\|\delta^i(t)\|_1\) remains unchanged while $t$ increases.

The reconstructed delta model $i$ is calculated as 
$
\delta^{i}(t+1) = \lambda^i(t) \cdot M^i(t) \odot \delta(t)
$.

In the proof below, there are two settings, and we will mark in each lemma and theorem whether it holds in setting 1 or 2.

\textbf{Setting 1}: This setting is the same as the normal continual learning process in the main paper. After the decoupling phase of each session $t$, we have $t-1$ delta models $\delta^1(t),\delta^2(t),\dots,\delta^{t-1}(t)$, and then a newly fine-tuned delta model $\delta^{t}(t)$ is added into the sequence and all the above $t$ delta models are unified.

\textbf{Setting 2}: This setting is a transitional setting used in the proof. Unlike the normal continual learning process, in this setting, we only have $n$ delta models during all sessions. After the decoupling phase of each session $t$, we have $n$ delta models $\delta^1(t),\delta^2(t),\dots,\delta^{n}(t)$ to be unified, and no newly fine-tuned delta model will be added in the sequence.

\subsection{Proof}

\iffalse
\begin{theorem*}[Theorem~\ref{cor:2} restatement: Convergence of Delta Models]
In setting 1, suppose the relative order of rescalers remains invariant throughout the continual learning process.
For any session $t\ge 1$,
we have $t-1$ delta models $\delta^1(t),\delta^2(t),\dots,\delta^{t-1}(t)$ decoupled from a bounded unified delta model, along with the latest delta model $\delta^t(t)$.
If all delta models are independently and identically distributed, and
the parameter signs of all delta models are identical for each dimension, then for each $i\in\{1,2,\dots,t-1\}$, the following property holds.
\begin{equation*}
\lim_{t \to +\infty} \mathbb{E}\left[\|\delta^i(t+1) - \delta^i(t)\|_1 - \|\delta^i(t) - \delta^i(t-1)\|_1\right]\le 0.
\end{equation*}
\end{theorem*}
\fi

Before proving Theorem\ref{cor:2}, we prove Lemmas~\ref{theorem2}, Lemmas~\ref{baohaoxing} and Corollary~\ref{cor:1}.

\begin{lemma}\label{theorem2}
Let \(X_1,X_2,\dots,X_n\) be i.i.d.\ random variables. Then
\[
\mathbb{P}\bigl(X_n = \max\{X_1,\dots,X_n\}\bigr)
\;=\;\frac{1}{n}.
\]
\end{lemma}

\begin{proof}
Assume that  \( X_1, X_2, \dots, X_n \) are random variables with a common probability density function \( f \) and cumulative distribution function \( F \).
We compute
\begin{alignat*}{1}
& \mathbb{P}\bigl(X_n = \max\{X_1,\dots,X_n\}\bigr) \\
= & \mathbb{P}\bigl(X_n > X_1,\dots,X_n > X_{n-1}\bigr) \\
= & \int_{-\infty}^{\infty} \mathbb{P}\bigl(x > X_1,\dots,x > X_{n-1}\bigr)f(x)\,\mathrm{d}x.
\end{alignat*}
Since \(X_1,\dots,X_{n-1}\) are i.i.d.\ with CDF \(F\), we have
\[
\mathbb{P}(X_1 < x,\dots,X_{n-1} < x)
\;=\;
\bigl[F(x)\bigr]^{n-1}.
\]
Hence
\[
\begin{split}
\mathbb{P}\bigl(X_n = \max\{X_1,\dots,X_n\}\bigr)
&=\;
\int_{-\infty}^{\infty}
   \bigl[F(x)\bigr]^{n-1}\,f(x)\,\mathrm{d}x
\\
&=\;
\int_{-\infty}^{\infty}
   \bigl[F(x)\bigr]^{n-1}\,\mathrm{d}F(x)
\\
&=\;
\int_{0}^{1}u^{n-1}\,\mathrm{d}u
\\
&=\;
\frac{1}{n}.
\end{split}
\]
This completes the proof.
\end{proof}

\begin{lemma}[Sign Preservation of $\delta^i$]\label{baohaoxing}
In setting 1, for a task $\delta^i$ in a continual learning session, the parameter at any position is guaranteed to preserve its sign. Specifically, if a position in $\delta^i(t')$ (denoted as $a^i(t')$) is positive (or negative) after an iteration, then $\forall t \geq t', a^i(t) \geq 0$ (or $\leq 0$). Moreover, if $a^i(t') = 0$, then $\forall t > t'$, $a^i(t) = 0$.
\end{lemma}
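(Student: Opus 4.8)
The plan is to prove the claim by tracking a single coordinate through the iterative process and showing the sign, once established, is an invariant. Fix a position $k$ and consider the sequence of values $a^i(j)$ for $j = 1, 2, \dots$, where $a^i(j)$ denotes the $k$-th coordinate of $\delta^i(j)$. I want to show that if $a^i(1) > 0$ then $a^i(j) \ge 0$ for all $j$, and symmetrically for the negative case, and that zero is absorbing.

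First I would unwind the definition of one iteration restricted to coordinate $k$. The reconstructed value is $\hat\delta^i_k = \lambda^i \cdot M^i_k \cdot \delta_k$, where $\delta_k = \gamma_{uni,k}\,\epsilon_{uni,k}$ is the $k$-th entry of the unified delta model, $M^i_k \in \{0,1\}$ is the consistency mask, and $\lambda^i > 0$ is a strictly positive scalar (it is a ratio of $L_1$ norms, hence nonnegative, and positive whenever $\delta^i \ne 0$). The key structural observation is that $M^i_k = 1$ only when $\mathrm{sgn}(\delta^i_k) = \mathrm{sgn}(\delta_k)$, so on the support of the mask the reconstructed coordinate $\hat\delta^i_k = \lambda^i \delta_k$ carries exactly the sign of $\delta^i_k$ itself; and when $M^i_k = 0$ the reconstructed coordinate is exactly $0$. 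Therefore after one iteration $\mathrm{sgn}(a^i(j{+}1)) \in \{0, \mathrm{sgn}(a^i(j))\}$ — the sign is either preserved or collapsed to zero, but it can never flip. This single step is the heart of the argument.

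From here the three assertions follow by induction on $j$. If $a^i(1) > 0$, then by the step lemma $a^i(2) \ge 0$; more carefully, I should note that the sign vector $\gamma_{uni}$ and the election of $\epsilon_{uni}$ at the next iteration are recomputed from the \emph{current} iterate $\delta^i(j)$, so I need that none of the other coordinates' behavior can reintroduce a negative value at position $k$ — but that is immediate because the update at position $k$ depends only on $\delta_k$ (itself a function only of the $k$-th coordinates of all delta models) and $\delta^i_k$, and the per-step claim already handled all cases. Chaining the per-step implication gives $a^i(j) \ge 0$ for every $j$. The negative case is identical with signs reversed. For the absorbing property: if $a^i_k = 0$ at iteration $k$ — I read the statement's ``$a^i_k = 0$'' as ``$a^i(k) = 0$ for some iteration index $k$'' — then either $\delta_k = 0$ at that round (in which case $\epsilon_{uni,k}$ stays $0$ forever, since a coordinate that is zero in every current delta model stays zero under the max-of-absolute-values election) or $M^i_k = 0$; in the latter case I'd argue that a coordinate which has been zeroed cannot be resurrected because $\hat\delta^i_k$ depends multiplicatively on the \emph{current} $\delta^i_k$, which is now $0$, forcing all future values to $0$ as well, and additionally $\delta^i_k = 0$ means $\delta^i_k \odot \delta_k = 0$ so the mask is again $0$ next round — a self-sustaining fixed point at $0$.

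The main obstacle I anticipate is the bookkeeping around \emph{which} iterate the quantities $\gamma_{uni}$, $\epsilon_{uni}$, $\delta$, $M^i$, $\lambda^i$ are computed from at each round, since the statement freezes the set of $n$ delta models but the iteration is genuinely nonlinear (the election uses $\max$ and $\mathrm{sgn}$). I need to be careful that ``preserves its sign'' is meant coordinatewise and that the recomputation at round $j{+}1$ uses $\{\delta^1(j), \dots, \delta^n(j)\}$; once that indexing is pinned down, the per-coordinate argument decouples completely across coordinates and the proof is a clean two-line case analysis followed by induction. A secondary subtlety is the degenerate case $\delta^i(j) = 0$, which makes $\lambda^i$ a $0/0$ expression; I would dispatch this by noting that if $\delta^i(j) = 0$ then trivially $a^i(j') = 0$ for all $j' \ge j$ by convention (the task has collapsed), consistent with the zero-absorption claim, so it does not threaten sign preservation.
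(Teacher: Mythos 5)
Your proposal is correct and follows essentially the same route as the paper's proof: the core observation in both is that $M^i_k=1$ only when $\delta^i_k$ and the unified coordinate share a sign (with a zero product yielding $M^i_k=0$), so each iteration can only preserve a coordinate's sign or collapse it to zero, and a zeroed coordinate forces the mask to zero thereafter, giving absorption; you simply spell out the per-coordinate induction and degenerate cases that the paper leaves implicit.
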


\begin{proof}
We observe that $M^i(t) = (\delta^i(t-1) \odot \delta(t))$. Therefore, if the signs remain consistent after an iteration, then $M^i(t) = 1$, and if the signs change, then $M^i(t) = 0$. As a result, positive (or negative) values in $\delta^i$ during the iteration process will either retain their signs or become zero. From the definition of $M^i(t)$, if $\delta^i(t-1) = 0$, it implies $M^i(t) = 0$, and thus $\delta^i(t) = 0$. Hence, the latter part of the lemma is also proved.
\end{proof}

\begin{lemma}[Convergence of Iteration]\label{theorem1}
In the setting 2, if the relative order of $\lambda^i(t)$ values remains unchanged and $\forall i \neq k,\forall t$, $\{j \mid M^i_j(t) = 1 \text{ and } M^k_j(t) = 1\} \neq \varnothing$, then these $n$ delta models will converge to a uniquely determined set of $n$ delta models.
\end{lemma}

\begin{proof}
First, we show that $\forall i\in [1, \dots, n], t > 0$, $M^i(1) = M^i(t)$.

Indeed, from the proof of Lemma \ref{baohaoxing}, we know that the number of zeros in $M^i(t)$ does not decrease during iterations. Since no new delta models are introduced, the number of zeros in $M^i(t)$ cannot increase either. Thus, the sign of each position remains fixed during iterations.

Let $\delta^i_j(t)$ denote the value at the $j$-th position of the $i$-th delta model after $t$ iterations, and let \(\phi(t)\) denote the delta model obtained by taking the absolute value of each position of \(\delta(t)\). Note that $\delta^i(1)$ is obtained by scaling $\phi(1)$ with $\lambda^i(1)$ and setting certain positions to 0:
\begin{equation}\label{temp}
\phi_{j}(2) = \phi_{j}(1) \cdot \max \{\lambda^i(1) \mid M^i_j(1) = 1\}.
\end{equation}

From Eq.~\eqref{temp}, it follows that the delta model $\delta^i(1)$ with the largest $\lambda^i(1)$ contributes all its non-zero values to $\phi(2)$. Let $i_m$ denote the index of the $m$-th largest scaling factor $\lambda^i(1)$. Since the relative order of \( \lambda^i(t) \) values remains unchanged, for \( t>1 \), the real \( i \) corresponding to \( i_m \) remains in \( \{1,2,\dots,n\} \).

For $t \geq 2$, we show that $\lambda^{i_1}(t) = 1$. By assumption, $\lambda^{i_1}(t) = \max \{\lambda^i(t) \mid i = 1, \dots, n\}$ for all $t$. Consider the part of $\delta^{i_1}(t)$ where $M^{i_1}_k(t) = 1$. These positions remain unchanged in $\phi_{j}(t)$ during iterations. Hence, $\delta^{i_1}(t) = M^{i_1}(t) \odot \phi(t) = \delta^{i_1}(t+1)$, and $\lambda^{i_1}(t) = 1$.

Next, we consider $i_2$ and divide $\delta^{i_2}(t)$ into two parts. Let $x(t)$ denote the sum of absolute values in $\delta^{i_2}(t)$ at positions where $M^{i_1}(t) = 1$ during the $t$-th iteration, and let $y(t)$ represent the sum of absolute values at positions where $M^{i_1}(t) = 0$. By the definition of $\lambda^{i_2}(t)$, we know that $x(t) + y(t) = c$ for all $t$, where $c$ is a constant. During each iteration, the values in $y(t)$ are incorporated into $\phi(t)$. Let $s$ denote the sum of absolute values in $\delta^{i_1}(t)$ at positions where $M^{i_2}(t) = 1$, which is also a constant. Then, we have:
\begin{equation*}
\begin{cases}
\lambda^{i_2}(t) = \frac{x(t) + y(t)}{s + y(t)} = \frac{c}{s + y(t)},\\
x(t+1) = \lambda^{i_2}(t) \cdot s = \frac{cs}{s + y(t)},\\
y(t+1) = \lambda^{i_2}(t) \cdot y(t) = \frac{cy(t)}{s + y(t)}.
\end{cases}
\end{equation*}

Since iterations do not alter the relative order of $\lambda^i(t)$, we have $\lambda^{i_2}(t) < \lambda^{i_1}(t) = 1$. Consequently, $0 < y(t+1) < y(t)$, which implies that both $x(t)$ and $y(t)$ converge, and $\delta^{i_2}(t)$ monotonically converges to a stable solution.

Using a similar analysis for $i_3, \dots, i_n$, it can be shown that all $\delta^i(t)$ eventually converge to unique solutions.
\end{proof}

\begin{corollary}\label{cor:1}
Under the same conditions as Theorem \ref{theorem1}, for each $i = 1, 2, \dots, n$, and for sufficiently large $t$, the following inequality holds:
\[
\|\delta^i(t+1) - \delta^i(t)\|_1 < \|\delta^i(t) - \delta^i(t-1)\|_1.
\]
\end{corollary}

\textbf{Proof of Theorem~\ref{cor:2}.}  

\begin{proof} 
Without loss of generality, assume that each position of \(\delta^i(t)\ \forall i \in \{1,\dots,t\}\) is positive.
Since we assume that $\delta(t)$ is bounded, and the absolute value at each position of $\delta^i(t)$ is less than $\delta(t)$, it follows that all $\delta^i(t)$ are bounded. Therefore, \(\forall t \) and \( i\in \{1,\dots,t\}\), \(\|\delta^i(t)\|_1\) is bounded, there exists a constant \(S>0\) such that
\[
    \|\delta^i(t)\|_1 < S,\quad \forall i \text{ and } t.
\]

Denote by \(\delta_*(t)\) the unified delta model selected in the iteration of \(\{\delta^i(t)\}_{i=1}^{t-1}\), and let \(\lambda^i_*(t)\) be the corresponding scaling factor for \(\delta^i(t)\). The post-iteration results are \(\{\delta^i_*(t+1)\}\).

According to Theorem~\ref{theorem2}, when iterating over the delta model set \(\{\delta^i(t)\mid i\in\{1,\dots,t\}\}\), for any positional dimension \(p\), we have
\begin{equation}\label{distribution}
    \mathbb{P}\bigl(\delta_{*p}(t)\neq\delta_p(t)\bigr)
    = \mathbb{P}\bigl(\delta^t_p(t) = \max_{1\le i\le t}\delta^i_p(t)\bigr)
    = \frac{1}{t}.
\end{equation}
Since
\(\delta_p(t) = \max_{1\le i\le t}\delta^i_p(t)\)
and
\(\delta_{*p}(t) = \max_{1\le i\le t-1}\delta^i_p(t)\),

let \( f(t) = \mathbb{E}\left[ \max_{1 \leq i \leq t} \delta_p^i(t) \right] \), and similarly \( f(t-1) = \mathbb{E}\left[ \max_{1 \leq i \leq t-1} \delta_p^i(t) \right] \), then
\[
    \quad \lim_{t\to \infty}r(t)= 1,
\]
where $r(t) \;=\;\frac{f(t)}{f(t-1)}$.

Therefore,
\begin{align}
    \mathbb{E}\!\bigl[\tfrac{\delta_p(t)}{\delta_{*p}(t)}\bigr]
    &= \frac{\mathbb{E}\bigl[\max\{\delta^i_p(t) \mid i=1,\dots,t\}\bigr]}
            {\mathbb{E}\bigl[\max\{\delta^i_p(t) \mid i=1,\dots,t-1\}\bigr]}
    = \frac{f(t)}{f(t-1)}
    = r(t).
    \label{expectation}
\end{align}
Combining \eqref{distribution} and \eqref{expectation}, it follows that
\begin{align*}
    \mathbb{E}\!\bigl[\tfrac{\|\delta(t)\|_1}{\|\delta_*(t)\|_1}\bigr]
    &= \mathbb{P}\!\bigl(\delta_p(t) = \delta_{*p}(t)\bigr)
       + \mathbb{P}\!\bigl(\delta_p(t) \neq \delta_{*p}(t)\bigr)\,\mathbb{E}\!\bigl[\tfrac{\delta_p(t)}{\delta_{*p}(t)}\bigr] \\
    &= \frac{t-1}{t}
       + \frac{1}{t}\,r(t) \\
    &= \frac{t-1 + r(t)}{t},
\end{align*}
and hence, for \(i\in\{1,\dots,t-1\}\),
\[
\mathbb{E}\!\bigl[\tfrac{\lambda^i(t)}{\lambda^i_*(t)}\bigr] 
= \mathbb{E}\!\bigl[  \frac{\|\delta^i(t)\|_1}{\|\delta(t)\|_1} \big/  \frac{\|\delta^i(t)\|_1}{\|\delta_*(t)\|_1}  \bigr]
= \mathbb{E}\!\bigl[\tfrac{\|\delta_*(t)\|_1}{\|\delta(t)\|_1}\bigr]
= \frac{t}{t - 1 + r(t)}.
\]
Since 
\[
\|\delta^i(t+1)-\delta^i(t)\|_1 
\le \|\delta^i_*(t+1)-\delta^i(t)\|_1
   +\|\delta^i(t+1)-\delta^i_*(t+1)\|_1,
\]
and suppose the dimension of $\delta^i(t+1)$ is d, then
\begin{alignat*}{2}
& \mathbb{E}\!\bigl[\|\delta^i(t+1)-\delta^i_*(t+1)\|_1\bigr] \\
= {} & \mathbb{P}\!\bigl(\delta_{*p}(t)=\delta_p(t)\bigr)\,\bigl|\lambda^i(t)-\lambda^i_*(t)\bigr|\,\mathbb{E}\!\bigl[\|\delta(t)\|_1\bigr] +\mathbb{P}\!\bigl(\delta_{*p}(t)\neq\delta_p(t)\bigr)\,\mathbb{E}\!\bigl[\|\delta^i(t+1)-\delta^i_*(t+1)\|_1\bigr] \\
= {} & \frac{t-1}{t}\,\frac{r(t)-1}{t}\,\lambda^i(t)\,\mathbb{E}\!\bigl[\|\delta(t)\|_1\bigr] +\frac{1}{t}\,\mathbb{E}\!\bigl[\|\delta^i_p(t+1)-\delta^i_{*p}(t+1)\|_1 \mid \delta_{*p}(t)\neq\delta_p(t)\bigr] \\
\le {} & \frac{(t-1)\,(r(t)-1)}{t^2}\,\mathbb{E}\!\bigl[\|\delta(t)\|_1\bigr] +\frac{d}{t}\,\mathbb{E}\!\Bigl[\bigl|\delta^i_p(t+1)-\delta^i_{*p}(t+1)\bigr| \mid \delta_{*p}(t)\neq\delta_p(t)\Bigr] \\
= {} & \frac{(t-1)\,(r(t)-1)}{t^2}\,\mathbb{E}\!\bigl[\|\delta(t)\|_1\bigr] +\frac{d}{t}\Bigl(\mathbb{E}\!\bigl[\lambda^i(t)\delta_p(t)\bigr] -\mathbb{E}\!\bigl[\lambda_*^i(t)\delta_{*p}(t)\bigr]\Bigr) \\
\le {} & \frac{(t-1)\,(r(t)-1)}{t^2}\,\mathbb{E}\!\bigl[\|\delta(t)\|_1\bigr] +\frac{d}{t}\Bigl(\tfrac{t}{t-1+r(t)}\,r(t)-1\Bigr)\, \frac{\mathbb{E}\!\bigl[\|\delta_*(t)\|_1\bigr]}{d} \\
\le {} & \frac{(t-1)\,(r(t)-1)}{t^2}\,S +\frac{(t-1)(r(t)-1)}{t(t-1+r(t))}\,S.
\end{alignat*}
Hence
\[
\lim_{t\to\infty}\mathbb{E}\!\bigl[\|\delta^i(t+1)-\delta^i_*(t+1)\|_1\bigr]=0.
\]
Moreover, by Corollary~\ref{cor:1}, it follows that
\[
\mathbb{E}\!\bigl[\|\delta^i_*(t+1)-\delta^i(t)\|_1\bigr]
< \mathbb{E}\!\bigl[\|\delta^i(t)-\delta^i(t-1)\|_1\bigr].
\]
Therefore, for \(i\in\{1,\dots,t-1\}\), we conclude
\[
\lim_{t\to\infty}\Bigl(\mathbb{E}\!\bigl[\|\delta^i(t+1)-\delta^i(t)\|_1\bigr]
- \mathbb{E}\!\bigl[\|\delta^i(t)-\delta^i(t-1)\|_1\bigr]\Bigr)
\le 0.
\]
\end{proof}
This finishes the proof.

\section{Storage Analysis}\label{append:B}
The masks align the unified delta model’s direction with that of each delta model, and the rescalers ensure that the unified model's parameter magnitude matches that of each delta model. Although the masks share the structure of the delta model, their binary nature ensures they require much less storage than the delta models, and the rescalers are $t$ scalars whose storage is negligible. We compare the storage size of model parameter files saved in Python between our method and Individual FT, The results of Individual FT represent the performance of using a fully fine-tuned model, trained independently on each task based on the pre-trained VLM, for prediction.

First, we look at the comparison in the full fine-tuning scenario. After training all 11 tasks, the storage size for ConDU is as follows: Clip Model (570.86 MB) + Unified Delta Model (570.86 MB) + Masks (196.20 MB) + Rescalers (747 KB) = 1377.92 MB. In contrast, Individual FT's storage size is Task-specific Model (570.86 MB) $\times$ 11 = 6279.46 MB, saving a total of 4901.54 MB in storage.

Next, we compare in the LoRA (Rank=64) scenario. After training all 11 tasks, the storage size for ConDU is as follows: Clip Model (570.86 MB) + Unified Delta Model of LoRA (37.53 MB) + Masks of LoRA (12.89 MB) + Rescalers (747 KB) = 621.28 MB. For Individual FT, the storage size is Clip Model (570.86 MB) + LoRA (37.53 MB) $\times$ 11 = 983.51 MB, saving a total of 362.23 MB in storage.

As shown, our method significantly alleviates the excessive storage requirement of Individual FT, and the storage reduction is more pronounced as the proportion of fine-tunable parameters increases and the number of tasks grows.

% zicong
\section{Training and Inference Time}\label{append:K}

\iffalse
Though our method ConDU introduces extra parameters during the continual learning process, we will further validate the efficiency or our approach by comparing the GFlops and complexity with existing continual learning methods and present the results in Table\ref{GFlops}

\begin{table}[thbp]
\centering
\caption{Comparison of computational cost during inference stage between our methods and others in terms of GFlops, average inference time of each iteration.
% \kai{adding one columns to fill the width}
}\label{GFlops}
%\resizebox{\columnwidth}{!}{%
% {\small
\begin{tabular}{cccc}
\toprule
Method & Train Parameters & GFlops & Times \\
\midrule
LWF & 149.6M & & \\ 
LWF-VR & 149.6M & & \\
ZSCL & 149.6M & & \\ 
MoE-Adpaters & 51.1M & & \\ 
DDAS & 8.7M & & \\ 
ConDU (FT) & 149.6M & & \\ 
COnDU (LoRA) & 9.8M & & \\

\bottomrule
\end{tabular}
\end{table} 
\fi

We compare the training and inference time between ConDU and a SOTA method ZSCL \citep{zscl}. Both methods are evaluated on a single GPU with computational power equivalent to an GeForce 4090. ``s'' denotes ``seconds'' here.

% ConDU (LoRA) : 401.72 + 303.63 + 0.64 + 3.72 = 993.9 ; ConDU (Ft) 442.43 + 292.31 + 0.62 + 3.55 = 738.91 ; Continual FT : 497 ; ZSCl : 1504 
\paragraph{Training Time.} 
%In each training session, our method, ConDU (LoRA), consumes 37.50 MB of GPU memory for training parameters, while ConDU (FT) consumes 570.76 MB. 
To demonstrate concrete performance, we analyze the SUN397 dataset from the last training session. For a fair comparison, we maintain the same settings, where the model is trained for 1000 iterations across all methods. The training pipeline of our method comprises four distinct phases: decoupling the unified model, tuning individually, computing prototypes, and unifying models.

%For ConDU (LoRA), the tuning process takes 0.53 seconds per iteration. The decoupling and unifying phases are training-free and require only 0.64 seconds and 3.72 seconds, respectively.
For ConDU (LoRA), the total training time of this session is decoupling (0.64s) + unifying (3.72s) + tuning (401.72s) + computing proto (303.63s) = 709.71s. In constrast, the total training time of ConDU (FT), ZSCL and Continual Fine-tuning is 738.91s, 1504s and 443s. 
%Although the prototype computation introduces some computational overhead, this operation completes in 562.54 seconds. Consequently, our method, ConDU (LoRA), achieves total training completion in 993.9 seconds. Similarly, for ConDU (FT), the tuning process takes 0.49 seconds per iteration, with a total training time of 849 seconds. In contrast, Continual Fine-tuning, where the model is tuned individually without any additional continual learning techniques, takes 0.50 seconds per iteration and consumes a total of 497 seconds. In comparison, ZSCL requires 2.63 seconds per iteration due to its complex training framework, resulting in a total training time of 2631 seconds. 
According to the results, ConDU does not significantly increase training time compared to Continual Fine-tuning and saves approximately 52\% time relative to ZSCL.

% The training pipeline of our method comprises four distinct phases: decoupling unified model, tuning individually, computing prototypes, and unifying models.  For tuning process, ConDU (LoRA) takes an average of 0.54 seconds and ConDU (FT) takes 0.49 seconds for each iteration, which are close to the 0.49 seconds per iteration by Continual Fine-tuning method. In the constrast, ZSCL \citep{zscl} requires 2.63 seconds per iteration due to its complicated training framework. Because decoupling and unifying phases are training-free, they require only 0.64 and 3.72 seconds respectively. While prototype computation introduces computational overhead, this operation completes in XX seconds. Consequently, our method ConDU (LoRA) and ConDU (FT) achieve total training completion in XX seconds and XX seconds, saving about 1192 seconds relative to ZSCL, which takes 2631 seconds. 

\paragraph{Inference Time.} At the inference stage, if a test sample comes from a known task with a provided task ID, the corresponding task-specific model is selected for prediction, making the inference time equivalent to that of a single model. When dealing with test samples from unseen tasks or without a task ID, our ConDU (LoRA) method involves two phases: task-specific model selection based on computing cosine similarity, and aggregating the predictions of the selected models. We take the SUN397 dataset in the last session as an example. The model selection phase takes 0.27s. Since the forward propagation of the selected task-specific models can be computed in parallel, the whole aggregating phase takes nearly the same time as the prediction of a single model, which is just 30.22s. In contrast, ZSCL takes 29.88s at the inference stage. The additional 1.58s consumed by our method due to extra algorithmic steps accounts for only 5.16\% of the total inference time. Since ConDU (FT) requires higher computational resources for parallel inference, we recommend using ConDU (LoRA) for scenarios where inference speed is a concern.

\section{Comprehensive Comparison between ConDU and Individual FT}\label{append:condu_individual}
We now provide an integrated comparison between Individual FT (i.e., training and storing a separate expert model for each task without unification) and ConDU in Table \ref{condu_individual_ft}. 

\begin{table*}[t]
\centering
\begin{small}
\setlength{\tabcolsep}{4pt}
\caption{Comprehensive Comparison between ConDU and Individual FT.}
\label{condu_individual_ft}
\begin{tabular}{l*{14}{c}} % 总共 15 列：1 个 l + 14 个 c
\toprule
\textbf{Method}
& \makecell[c]{\rotatebox{90}{Aircraft}}
& \makecell[c]{\rotatebox{90}{Caltech101}}
& \makecell[c]{\rotatebox{90}{CIFAR100}}
& \makecell[c]{\rotatebox{90}{DTD}}
& \makecell[c]{\rotatebox{90}{EuroSAT}}
& \makecell[c]{\rotatebox{90}{Flowers}}
& \makecell[c]{\rotatebox{90}{Food}}
& \makecell[c]{\rotatebox{90}{MNIST}}
& \makecell[c]{\rotatebox{90}{OxfordPet}}
& \makecell[c]{\rotatebox{90}{Cars}}
& \makecell[c]{\rotatebox{90}{SUN397}}
& \makecell[c]{\rotatebox{90}{\textit{Avg}}}
& \makecell[c]{\rotatebox{90}{\textit{FMS (MB)}}}
& \makecell[c]{\rotatebox{90}{\textit{IT (s)}}} \\
\midrule
IF (FT)   & 62.0 & 95.1 & 89.6 & 79.5 & 98.9 & 97.5 & 92.7 & 99.6 & 94.7 & 89.6 & 81.8 & 89.2 & 6279.46 & 28.34 \\
IF (LoRA) & 59.5 & 97.3 & 89.0 & 79.9 & 98.6 & 97.7 & 92.8 & 99.4 & 94.3 & 90.5 & 81.9 & 89.2 & 983.51  & 29.95 \\
CD (FT)      & 58.6 & 93.7 & 86.6 & 76.1 & 98.2 & 93.4 & 91.9 & 99.6 & 94.8 & 84.9 & 80.5 & 87.1 & 1377.92 & 28.75 \\
CD (LoRA)    & 48.9 & 95.2 & 87.8 & 78.5 & 96.3 & 95.2 & 91.7 & 97.6 & 93.0 & 85.3 & 78.8 & 86.2 & 621.28  & 30.22 \\
\bottomrule
\end{tabular}
\end{small}
\end{table*}

``IF'' denotes Individual FT. ``CD'' denotes ConDU. The results show that:
\begin{itemize}
    \item Accuracy (on standard MTIL, using the ``Last'' metric): Individual FT achieves slightly better accuracy than ConDU.
    \item Storage: ``FMS'' (Final Model Size) measures the size of the model after all sessions are completed. Individual FT incurs significantly higher storage costs than ConDU, as all expert models are stored independently.
    \item Inference Time: ``IT'' denotes ``Inference Time''. While ConDU introduces a lightweight expert-weight computation step, overall inference latency is nearly unchanged. The inference time here refers to the inference time of the last session on the SUN397 test set. %We have only tested the inference time of ConDU (LoRA) and Individual FT (LoRA) for now, and will supplement the FT results later.
    \item Generality: Importantly, Individual FT does not support zero-shot inference or task-agnostic settings, limiting its applicability. Thus, the comparsion only includes the ``Last'' metric on standard MTIL.
\end{itemize}

\section{Accuracy during Sessions}\label{acc_curve}
Table~\ref{sessions} shows the test accuracy across all tasks at the end of each session. Based on our experimental observations, the performance of all previous tasks decreases more after MNIST session than other sessions due to the significant differences between the handwritten digit recognition task (MNIST) and other tasks that focus on object classification. However, the magnitude of the drop remains small, demonstrating ConDU’s strong robustness even when encountering highly OOD tasks.
\begin{table*}[t]
\centering
\begin{small}
\setlength{\tabcolsep}{4pt}
\caption{Performance across sessions on different datasets.}
\begin{tabular}{lcccccccccccc}
\toprule
& \makecell[c]{\rotatebox{90}{Aircraft~}} & \makecell[c]{\rotatebox{90}{Caltech101~}} & \makecell[c]{\rotatebox{90}{CIFAR100~}} & \makecell[c]{\rotatebox{90}{DTD~}} & \makecell[c]{\rotatebox{90}{EuroSAT~}} & \makecell[c]{\rotatebox{90}{Flowers~}} & \makecell[c]{\rotatebox{90}{Food~}} & \makecell[c]{\rotatebox{90}{MNIST~}} & \makecell[c]{\rotatebox{90}{OxfordPet~}} & \makecell[c]{\rotatebox{90}{Cars~}} & \makecell[c]{\rotatebox{90}{SUN397~}} & \makecell[c]{{\textit{Average}}} \\
\midrule
Session 1  & 61.3 &       &       &      &       &       &      &       &       &      &      & 61.3 \\
Session 2  & 60.5 & 94.6  &       &      &       &       &      &       &       &      &      & 77.6 \\
Session 3  & 59.8 & 93.9  & 87.9  &      &       &       &      &       &       &      &      & 80.5 \\
Session 4  & 60.0 & 94.0  & 87.5  & 77.1 &       &       &      &       &       &      &      & 79.7 \\
Session 5  & 59.9 & 93.9  & 87.2  & 76.7 & 98.4  &       &      &       &       &      &      & 83.2 \\
Session 6  & 59.8 & 93.9  & 87.2  & 76.8 & 98.3  & 95.0  &      &       &       &      &      & 85.2 \\
Session 7  & 59.5 & 93.9  & 87.1  & 76.5 & 98.3  & 94.5  & 92.3 &       &       &      &      & 86.0 \\
Session 8  & 58.7 & 93.8  & 86.5  & 75.9 & 98.1  & 93.4  & 92.0 & 99.6  &       &      &      & 87.3 \\
Session 9  & 58.8 & 93.8  & 86.5  & 76.0 & 98.2  & 93.4  & 91.9 & 99.6  & 94.9  &      &      & 88.1 \\
Session 10 & 58.5 & 93.8  & 86.5  & 76.0 & 98.2  & 93.4  & 91.9 & 99.6  & 94.8  & 85.7 &      & 87.8 \\
Session 11 & 58.6 & 93.7  & 86.6  & 76.1 & 98.2  & 93.4  & 91.9 & 99.6  & 94.8  & 84.9 & 80.5 & 87.1 \\
\bottomrule
\label{sessions}
\end{tabular}
\end{small}
\end{table*}

\section{Introduction of LoRA}\label{append:A}
Below we revisit LoRA, in which $g$ is the module that PEFT attached to, $\mathbf{e}$ and $\mathbf{h}$ are input and output of the original $g$ and $\mathbf{h}'$ is output of $g$ attached with PEFT.

\noindent\textbf{LoRA}~\citep{lora}
assumes the change of parameters is in a low-rank space when tuning the pre-trained model on a downstream task. For a linear layer with weight $\mathbf{W} \in \mathbb{R}^{d \times d'}$, the weight updates $\Delta \mathbf{W}$ can be decomposed into the multiplication of two small matrices:
\begin{equation*}
  \Delta \mathbf{W} = \mathbf{W}_{down} \mathbf{W}_{up},
\end{equation*}
where $\mathbf{W}_{down} \in \mathbb{R}^{d \times r}$ and $\mathbf{W}_{up} \in \mathbb{R}^{r \times d'}$. For the convolution layer, the updates can be reshaped into the kernel shape.
Finally, LoRA modifies the forward pass of the adapted layer into the following form:
\begin{equation*}
  \label{eq:lora}
  \textbf{h}' = \textbf{h} + \textbf{e} \ast (\mathbf{W}_{down} \mathbf{W}_{up}),
\end{equation*}
where $\ast$ is matrix multiplication or convolution operation,
the bias and reshape operation are omitted for conciseness. Since LoRA adapts the weight of $g$, the weight updates can be merged into $g$ to reduce the inference latency.

\section{The Use of Large Language Models (LLMs)}
The LLMs are used only to help polishing writing in this work.

\end{document}